\newtheorem{theorem}{Theorem}
\newcommand{\xmark}{\text{\ding{55}}}
\newcommand{\ymark}{\text{\ding{51}}}
\newcommand{\myparagraph}[1]{\vspace{5pt} \noindent \textbf{#1.}}
\begin{document}

\title{From Big to Small:\\ Adaptive Learning to Partial-Set Domains}

\author{Zhangjie~Cao,
	Kaichao~You,
	Ziyang~Zhang,
	Jianmin~Wang,
	and~Mingsheng~Long (\Letter)
	\IEEEcompsocitemizethanks{
		\IEEEcompsocthanksitem Z. Cao, K. You, J. Wang, and M. Long are with the School of Software, BNRist, Tsinghua University. E-mail: \{caozhangjie14,youkaichao\}@gmail.com, \{jimwang,mingsheng\}@tsinghua.edu.cn.
		\IEEEcompsocthanksitem Z. Zhang is with the Data Center Technology Lab at Huawei. E-mail: zhangziyang11@huawei.com.
		\IEEEcompsocthanksitem Z. Cao and K. You contributed equally to this work.
		\IEEEcompsocthanksitem Corresponding author: Mingsheng Long, mingsheng@tsinghua.edu.cn.
		\IEEEcompsocthanksitem A preliminary conference version of this paper appeared in \cite{cite:CVPR18SAN}.
	}
	\thanks{Manuscript received March 2020, revised July 2021.}
}

\markboth{IEEE Transactions on Pattern Analysis and Machine Intelligence}{Cao \MakeLowercase{\textit{et al.}}}

\IEEEcompsoctitleabstractindextext{%
	\begin{abstract}
		Domain adaptation targets at knowledge acquisition and dissemination from a labeled source domain to an unlabeled target domain under distribution shift. Still, the common requirement of \emph{identical class space} shared across domains hinders applications of domain adaptation to partial-set domains. Recent advances show that deep pre-trained models of large scale endow rich knowledge to tackle diverse downstream tasks of small scale. Thus, there is a strong incentive to adapt models from large-scale domains to small-scale domains. This paper introduces Partial Domain Adaptation (PDA), a learning paradigm that relaxes the identical class space assumption to that the source class space subsumes the target class space. First, we present a theoretical analysis of partial domain adaptation, which uncovers the importance of estimating the \emph{transferable probability} of each class and each instance across domains. Then, we propose Selective Adversarial Network (SAN and SAN++) with a \emph{bi-level selection} strategy and an adversarial adaptation mechanism. The bi-level selection strategy up-weighs each class and each instance simultaneously for source supervised training, target self-training, and source-target adversarial adaptation through the transferable probability estimated alternately by the model. Experiments on standard partial-set datasets and more challenging tasks with superclasses show that SAN++ outperforms several domain adaptation methods.
	\end{abstract}

	\begin{IEEEkeywords}
		Deep transfer learning, partial domain adaptation, selective adversarial network, theoretical analysis
	\end{IEEEkeywords}}

\maketitle

\IEEEpeerreviewmaketitle
\IEEEdisplaynotcompsoctitleabstractindextext

\section{Introduction}

\IEEEPARstart{D}{eep} learning has made tremendous progress in solving cognitive tasks and data-intensive tasks~\cite{lecun_deep_2015}, while the impressive performance gains rely on large-scale annotated data~\cite{goodfellow_deep_2016}. Collecting sufficient data for diverse applications is costly and unrealistic. Domain adaptation reduces the annotation cost of the domain of interest (the target domain) by adapting knowledge learned from labeled data available in a related domain (the source domain)~\cite{pan2009survey,tan2018survey}.
The main technical challenge of domain adaptation is the distribution shift between the source and target domains~\cite{quionero-candela_dataset_2009,torralba_unbiased_2011}, which violates the \emph{independent and identically distributed} (i.i.d.) assumption of standard statistical learning~\cite{bishop_pattern_2006,vapnik_nature_2013}. Learning domain-invariant representations is a feasible solution to the distribution shift problem, through which the source classifier can be applied to the target domain~\cite{long_learning_2015,cite:ICML15DANN}.

Existing domain adaptation methods usually assume that the source and target domains share \textit{an identical class space}~\cite{long2015learning,cite:JMLR16DANN,NEURIPS2018_ab88b157}. The assumption is easily violated in practical applications since it is unrealistic to verify the identical class space assumption with unlabeled target data. With the emergence of deep models trained on large-scale datasets, transferring knowledge from large datasets to solve downstream tasks has become more realistic and useful. Motivated by this, we relax the identical class space assumption to that the source class space subsumes the target class space. For example, the source domain can be a large-scale dataset, \textit{e.g.}, ImageNet-1K~\cite{cite:ILSVRC15} and OpenImage~\cite{cite:OpenImage} with comprehensive classes, while the target domain can be a small-scale dataset with specific classes. Domain adaptation under this relaxed assumption is coined \textit{Partial Domain Adaptation} ({PDA}).

Partial domain adaptation facilitates a more practical ``\emph{from big to small}'' scenario, \emph{\emph{i.e.}}, transferring knowledge from big domains with broad classes to small domains with narrow classes. This is in parallel with the remarkable success in deep learning, that deep pre-trained models of large-scale endow rich knowledge to tackle diverse downstream tasks of small-scale \cite{Brown20a,chen2020simple}. For example, a self-driving company may collect a large-scale traffic sign dataset in many countries with rich diversity, and transfer the learned model to a new country where traffic signs are not very diverse but it is cumbersome to collect and annotate data~\cite{arcos-garcia_deep_2018,stallkamp_german_2011}.

\begin{figure*}[tbp]
	\centering
	\subfigure[Challenge in PDA]{
		\includegraphics[width=0.45\textwidth]{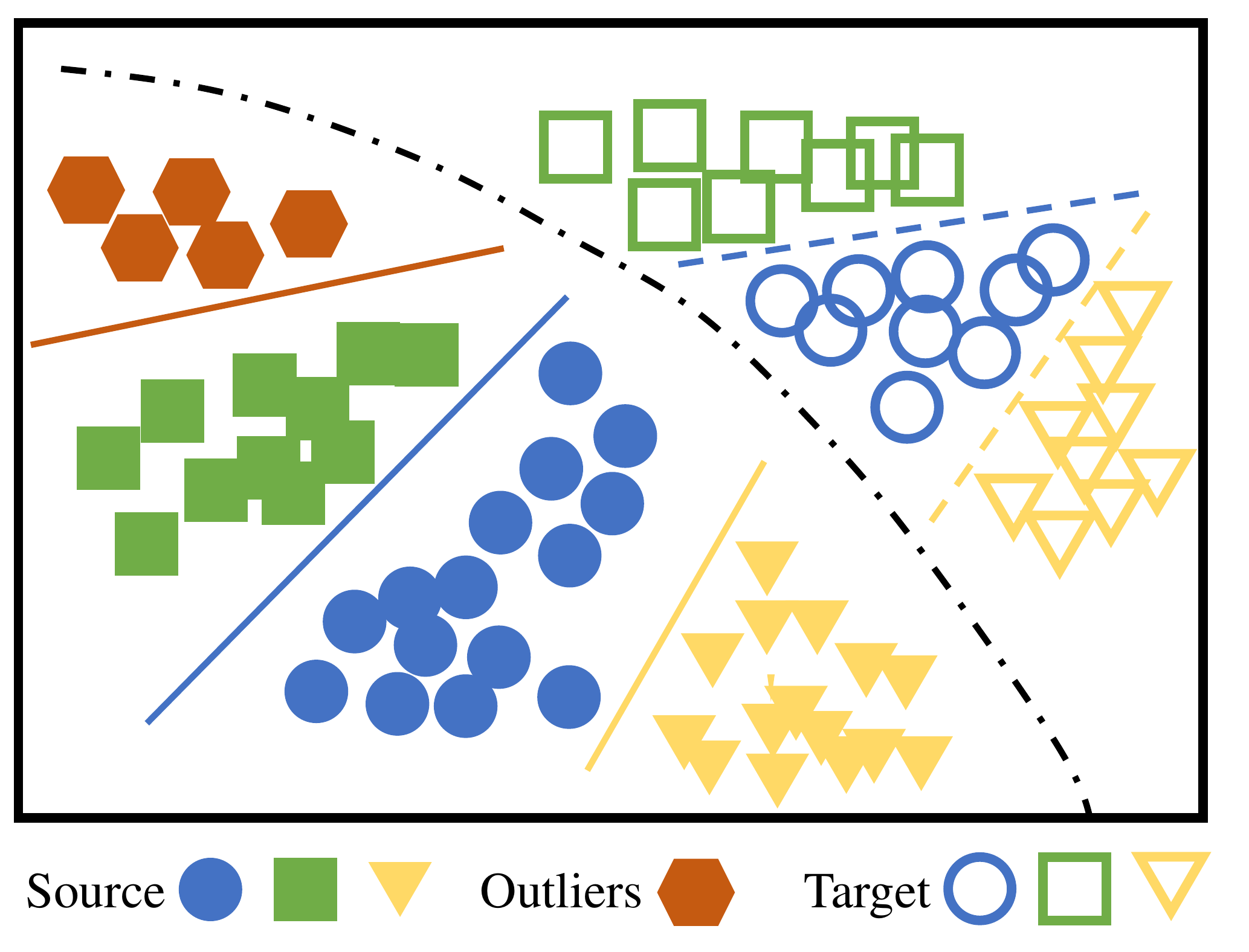}
		\label{fig:san_problem}
	}\hfil
	\subfigure[Solution to PDA]{
		\includegraphics[width=0.45\textwidth]{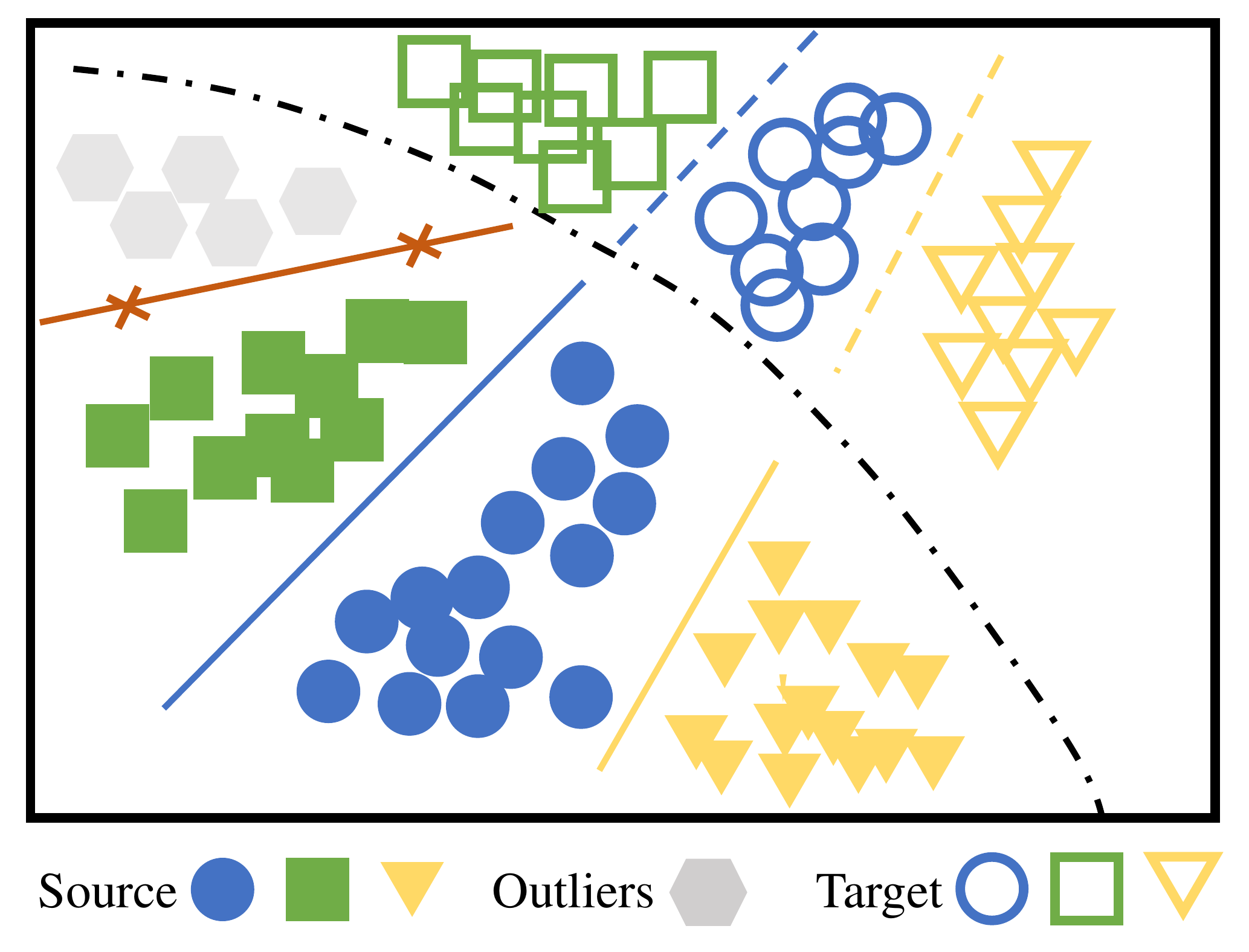}
		\label{fig:san_solution}
	}
	\caption{The challenge and solution of partial domain adaptation (PDA). The solid shapes show the data in the source domain and the hollow shapes show the data in the target domain. Each shape represents a class. The main assumption is that the source class space subsumes the target class space. (a) The challenge is caused by the \emph{source-specific outlier class} (hexagon) that is absent from the target domain, as well as the \emph{source-target distribution shift} on the shared classes (circle, square, and triangle) shown by the different decision boundaries (solid lines) for the source and target domains. (b) We propose a \emph{bi-level selection} strategy to eliminate the influence of source-specific outlier examples so as to promote discriminative learning and distribution alignment in the shared class space.}
	\label{fig:SANproblem}
\end{figure*}

As illustrated in Fig.~\ref{fig:SANproblem}, partial domain adaptation (PDA) faces the \emph{negative-positive transfer dilemma}. (1) Forcing knowledge transfer from the whole source domain to the target domain is at the risk of \emph{negative transfer} on the target domain, since the extra source knowledge is not relevant to the target task. To avoid negative transfer, we should avoid transferring any source-specific class knowledge to the target domain. For example, in Fig.~\ref{fig:SANproblem}, we should eliminate the source class `hexagon' during knowledge transfer. (2) To promote \emph{positive transfer}, we not only need to uncover the domain-shared class space, but also need to further explore the class-conditional information to enhance the alignment of multimodal distributions in the shared class space. For example, in Fig.~\ref{fig:SANproblem}, we need to transfer knowledge from the source classes `circle', `square' and `triangle' (in solid shapes) to corresponding target classes `circle', `square' and `triangle' (in hollow shapes).

The key idea to address partial domain adaptation (PDA) is to recognize which part of the source class space can be shared with the target class space. Intuitively, the contribution of each source class to the target domain can be characterized by the \emph{transferable probability} over all source classes, \emph{i.e.}, source classes with higher transferable probability are more likely to be shared with the target class space. In this paper, we theoretically analyze the PDA problem and provide a bound on the estimation error of the transferable probability for each source class, which sheds a theoretical light on the algorithm design.

Based on the theoretical analysis, we propose Selective Adversarial Network (SAN) to address the PDA problem. We develop a \emph{bi-level selection} strategy consisting of class selection and instance selection. In \emph{class selection}, we estimate the transferable probability of each source class from source labeled data and target unlabeled data, which ensures that source supervised training, target self-training and source-target adversarial adaptation are confined within the shared class space. This avoids \emph{negative transfer} caused by the source-specific outlier classes when transferring ``from big to small''. In \emph{instance selection}, we utilize a multi-task discriminator with each task enabling distribution alignment conditioned on a source class and estimating the transferable probability of each target instance to ensure that target instances are assigned to the correct alignment task. This promotes \emph{positive transfer} by aligning the class-conditioned feature distributions in the shared class space. We perform extensive empirical studies on several benchmarks for both standard domain adaptation and partial domain adaptation settings. The results show that the proposed SAN approach outperforms previous works significantly.

This journal paper extends our conference paper~\cite{cite:CVPR18SAN} in a number of important perspectives:
\begin{itemize}
	\item We provide a \emph{theoretical analysis} on the PDA problem and unveil the importance of estimating the transferable probability of each class and each instance across domains in the journal version. This provides a theoretical foundation for our algorithm design of the \emph{bi-level selection} strategy in the adversarial adaptation framework.

	\item For the method design, our conference version, coined as SAN, proposes multiple discriminators and applies the instance selection and class selection to the discriminators, which confines \emph{only the adaptation procedure} into the shared class space. The journal version, coined as SAN++, further proposes a self-training loss to enhance the target prediction and enables class selection on all training losses including source supervised training, target self-training and source-target adversarial adaptation. This strategy confines \emph{the whole learning procedure} into the shared class space which further promotes positive transfer and aviods negative transfer.

	\item The proposed SAN and SAN++ are extensively tested on common datasets for partial domain adaptation. SAN++ achieves state-of-the-art performances on all these datasets. Furthermore, when partial domain adaptation becomes more challenging with target classes being subclasses of source classes, many PDA methods fail but SAN++ still has a decent performance, demonstrating the generality of SAN++.
\end{itemize}

\section{Related Work}

In this section, we review the relevant literature on domain adaptation with and without labeled data in the target domain. When target labeled data are available, the problem is typically solved by pre-training the model with the source labeled data and then fine-tuning the model on the target labeled data, which is called ``transfer learning''. Prior works on this transfer learning paradigm either improve the quality of the pre-trained model such as pre-training on large-scale datasets~\cite{cite:ICML14DeCAF,he2019rethinking} or self-supervised learning~\cite{dosovitskiy2015discriminative,chen2020simple}, or develop advanced algorithms to enable fine-tuning~\cite{kou_stochastic_2020,you_co-tuning_2020}. Transfer learning with few labeled data is addressed by semi-supervised methods~\cite{cite:ICCV15SDT,herath_learning_2017,cite:CVPR17DAMA,cite:ICCV17SDA,wang_self-tuning_2021}. In this paper, we focus on transfer learning without any target labeled data, \emph{i.e.} domain adaptation, which has been studied by various methods prior to deep learning~\cite{cite:ECCV10Office,cite:TNN11TCA,cite:ICML13TCS,cite:NIPS14FTL}. However, their empirical success is now surpassed by deep domain adaptation methods. We refer readers to Pan \& Yang \cite{pan2009survey} for a survey of methods before the era of deep learning.

We focus on the class space relationship between the source domain and the target domain in this paper. The specific applications of domain adaptation such as detection~\cite{rodriguez_domain_2019}, de-stylization~\cite{shiri_identity-preserving_2019} and domain selection~\cite{zhang_model_2018} are not the focus of the paper. Table~\ref{tab:related_work} shows different domain adaptation settings based on the relationship between the source and target class spaces. We review the methods of each setting below.

\begin{table}[tbp]
	\centering
	\addtolength{\tabcolsep}{6pt}
	\caption{Comparison of different domain adaptation settings based on their assumptions. $\mathcal{C}_s$ and $\mathcal{C}_t$ denote the source class space and target class space respectively.}
	\label{tab:related_work}
	\begin{tabular}{l|c}
		\toprule
		Problem Setting                                                       & Assumption                                     \\
		\midrule
		Closed-Set Domain Adaptation (DA)~\cite{long_learning_2015}           & $\mathcal{C}_t=\mathcal{C}_s$                  \\
		Open-Set Domain Adaptation (OSDA)~\cite{panareda_busto_open_2017}     & $\mathcal{C}_t\supseteq\mathcal{C}_s$          \\
		Partial Domain Adaptation (PDA)~\cite{cite:CVPR18SAN,cite:CVPR18IWAN} & $\mathcal{C}_t\subseteq \mathcal{C}_s$         \\
		Universal Domain Adaptation (UniDA)~\cite{you_universal_2019}         & $\mathcal{C}_t\cap\mathcal{C}_s \ne \emptyset$ \\
		\bottomrule
	\end{tabular}
\end{table}

\subsection{Closed-Set Domain Adaptation}
\label{sec:close_da}

Closed-set domain adaptation addresses the fundamental technical challenge of distribution shift under the assumption that source and target domains share an identical class space.

Minimizing a well-defined \emph{statistical discrepancy} between feature distributions across domains is the mainstream unsupervised domain adaptation approach. Deep adaptation network (DAN)~\cite{long_learning_2015, long_transferable_2019} learns transferable features across domains by maximizing the test power of the statistical distance, \emph{i.e.}, multi-kernel maximum mean discrepancy (MK-MMD), and minimizing the feature distance to generate domain-invariant representations.
Residual transfer network (RTN)~\cite{cite:NIPS16RTN} further introduces a shortcut path and adopts entropy minimization criterion to utilize the inductive bias of the cluster assumption. Joint adaptation network (JAN)~\cite{cite:ICML17JAN} closes the domain gap of the joint distribution of features and predictions. CORAL~\cite{cite:AAAI16CORAL} aligns the covariance of features between domains. CMD~\cite{cite:ICLR17CMD} defines Central Moment Discrepancy to quantify the discrepancy across domains via multiple orders of moments. Associative Domain Adaptation~\cite{cite:ICCV17ADA} constructs a bipartite graph to estimate the associative similarity between each data pair, which avoids mismatching features across classes. Maximum Classifier Discrepancy (MCD)~\cite{cite:CVPR18MCD} maximizes the disagreement of two target classifiers and optimizes the feature extractor to minimize the disagreement in a min-max optimization framework. The min-max game is also used by Herath~\textit{et al.} \cite{herath_min-max_2019} to obtain state-of-the-art performance in both domain adaptation and zero-shot learning.

Matching distributions across domains by \emph{adversarial learning} is another cornerstone approach to unsupervised domain adaptation. The min-max framework in Generative Adversarial Networks (GAN)~\cite{cite:NIPS14GAN} adopts a discriminator and a generator with opposite goals, which finally reaches a dynamic equilibrium. GAN and its fancy variants~\cite{radford_unsupervised_2015, karras_progressive_2018} have shown significant success in generating realistic images. Such adversarial learning paradigm has been applied to domain adaptation to align the source and target distributions, where a domain discriminator distinguishes between features from the source domain and the target domain, and a feature extractor aims to confuse the domain discriminator in an adversarial training paradigm~\cite{cite:ICML15DANN,cite:JMLR16DANN,cite:CVPR17ADDA}.
LEL~\cite{cite:NIPS17LEL} enhances adversarial discriminative domain adaptation by restricting the representation of each target example to be similar to a few source data points using entropy minimization. Volpi~\textit{et al.} \cite{cite:CVPR18AFA} proposed feature augmentation to use GAN to directly generate domain-invariant features and show promising results in cross-modal object recognition.

Another branch of closed-set domain adaptation is \emph{pixel-level domain adaptation}, which generates labeled images similar to the target domain.
There are various approaches to generating images from deep features. Hu and Sankaranarayanan~\textit{et al.} \cite{cite:CVPR18GenerateAdapt,cite:CVPR18DupGAN} generate source and target domain images from features of intermediate network layers with GAN, and employ an AC-GAN~\cite{odena_conditional_2017} domain discriminator for adaptation. Their auxiliary domain discriminator can both preserve the class information of images and adapt features across domains. DupGAN~\cite{cite:CVPR18DupGAN} provides an additional domain code to the generator to guide the source-target and target-source translation process. GTA~\cite{cite:CVPR18GenerateAdapt} transforms source features to source-like images and requires target features to be close to source features. CDRD~\cite{liu_detach_2018} focuses on disentangling cross-domain features. A recent line of research focuses on generating target images from images rather than features. PixelDA~\cite{cite:CVPR17PLDA} generates images conditioned on source domain images but requires the images to be similar to the target domain. Inspired by CycleGAN~\cite{cite:ICCV17CycleGAN}, which shows impressive results in unpaired image-to-image translation, cycle-consistency-based domain adaptation methods~\cite{cite:ICML18CYCADA,cite:CVPR18BDAGAN} have been developed. The cycle-consistency loss requires that the generated target images can be transformed back to the original source images. These methods further enforce class consistency to preserve class information in the source-target-source transformation loop. Murez~\textit{et al.} \cite{cite:CVPR18IIT} proposed a loss function to unify classification loss, domain adversarial loss, reconstruction loss, translation adversarial loss, cycle consistent loss and class preserving loss.

Both feature-level and pixel-level domain adaptation methods require the same class space between the source domain and the target domain, which are not applicable in the partial domain adaptation setting studied in this paper.

\subsection{Open-Set Domain Adaptation}
\label{sec:openset}

Open-set domain adaptation generalizes open-set recognition~\cite{scheirer_toward_2013} to the domain adaptation setup. It relaxes the identical class space assumption of closed-set domain adaptation to that there are test data belonging to none of the source classes. The key to open-set domain adaptation is to eliminate the open classes in the target class space. {Busto and Gall}~\cite{panareda_busto_open_2017} proposed a cluster-based ATI algorithm for open-set domain adaptation. ATI requires that the shared classes between domains are known, therefore, limiting its practical significance. {Saito~\textit{et al.}} \cite{saito_adversarial_2018} studied a variant of open-set domain adaptation, where the target class space subsumes source categories. The classifier is extended with an open class which is trained adversarially. The adversarial training requires a prior of the open classes. {Liu~\textit{et al.}} \cite{liu_separate_2019} improved the solution with a ranking mechanism in each mini-batch to cancel the necessity of a manual threshold. {You~\textit{et al.}} \cite{you_universal_2019} further studied the case where no assumption about the class space relationship is imposed, which is dubbed universal domain adaptation. They combined both class and domain information to distinguish the open classes and promote the adaptation between the shared class space. {Baktashmotlagh~\textit{et al.}} \cite{baktashmotlagh_learning_2019} exploited subspace learning to discover the shared structure of both domains so that target private data can be detected. {Fang~\textit{et al.}} \cite{fang_open_2020} derived a generalization bound for open-set domain adaptation.

Contrary to our setting, open-set domain adaptation allows novel classes in the target class space and corresponds to a ``from small to big'' problem. Research efforts in open-set domain adaptation lay a foundation for relaxing the identical class space assumption in domain adaptation.

\subsection{Partial Domain Adaptation}

Closed-set domain adaptation methods provide fundamental mathematical tools for learning from different domains, but the assumption of an identical class space cannot be verified due to the absence of target labels, which restricts their applications. {Tas and Koniusz}~\cite{tas_cnn-based_2018} studied domain adaptation across two domains with different class spaces when a few target labels are available. The problem with an unlabeled target domain is significantly more difficult. Motivated by the release of large-scale datasets (such as ImageNet-1K~\cite{cite:ILSVRC15}), partial domain adaptation is introduced  in our conference version \cite{cite:CVPR18SAN}, by relaxing the identical class space assumption to that the target class space is a subset of the source class space. Most partial domain adaptation methods design weighting schemes to re-weight source examples during domain alignment. IWAN~\cite{cite:CVPR18IWAN} develops a domain discriminator to produce the probability of an input belonging to the shared class space and uses the probability to re-weight data in adversarial domain alignment. PADA~\cite{cao_partial_2018} further applies the class-level weight on the source classifier. ETN~\cite{cao_learning_2019} leverages an auxiliary classifier to achieve state-of-the-art performance. DRCN~\cite{li2020deep} replaces the adversarial alignment with MMD-based alignment and uses a residual corrector~\cite{cite:NIPS16RTN} to enable asymmetric feature mapping for both domains. {BA$^3$US~\cite{liang2020baus} combines balanced adversarial alignment with adaptive uncertainty suppression, thereby providing an alternative weighting scheme.}

Besides the weighting scheme, there emerge some hard selection methods to tackle partial domain adaptation: once an example or a class is treated as an outlier, it is discarded and never used again. {Fariba~\textit{et al.}} \cite{zohrizadeh_class_2019} select a subset of classes in the source domain by $L_{1, \inf}$ norm penalization of the weight matrix. {Chen~\textit{et al.}} \cite{chen_selective_2020} incorporate reinforcement learning~\cite{sutton_reinforcement_2018} to select useful examples by rewarding data with low reconstruction error. {Hu \textit{et al.}~\cite{hu_discriminative_2020} try to separate source positive distribution and source negative distribution besides hard selection.} {Chen~\textit{et al.}} \cite{chen_domain_2020} incorporate deep Q-learning~\cite{mnih_human-level_2015} into domain adversarial learning and design a reward function depending on the relevance between the selected source instances and target data. Hard selection can be useful but may suffer from high variance. Once a wrong decision is made and a useful source example is discarded, it is impossible to bring it back to the training.

\section{Partial Domain Adaptation}

Given a labeled source domain $\mathcal{D}_s= \{(\mathbf{x},y)\}$ with class space $\mathcal{C}_s$ and an unlabeled target domain $\mathcal{D}_t= \{(\mathbf{x})\}$ with class space $\mathcal{C}_t$, partial domain adaptation aims to improve the performance of the task in the target domain by leveraging the labeled data in the source domain, where the source and target class spaces satisfying $\mathcal{C}_t \subseteq \mathcal{C}_s$.

\myparagraph{Notations} The \emph{shared} class space $\mathcal{C}$ between the source and target domains is defined as $\mathcal{C}= \{y| y \in \mathcal{C}_s, y \in \mathcal{C}_t\} = \mathcal{C}_s \cap \mathcal{C}_t$, \emph{i.e.}, the intersection of source class space and target class space. Since we assume $\mathcal{C}_t \subseteq \mathcal{C}_s$, then $\mathcal{C} = \mathcal{C}_t$. For simplicity, we denote the \emph{outlier} source class space that is irrelevant to the target class space as $\bar{\mathcal{C}}_s=\mathcal{C}_s \backslash \mathcal{C}$.
Then the source subdomain in the shared class space $\mathcal{C}$ is
\begin{equation*}
	\mathcal{D}_s^{\mathcal{C}} \triangleq \{(\mathbf{x},y) \in \mathcal{D}_s  | y \in \mathcal{C}\}.
\end{equation*}
Under such notations, $\mathcal{D}_s^{\bar{\mathcal{C}}_s}$ will represent the source subdomain in the outlier source class space. \textit{It is important to note that the target class space $\mathcal{C}_t$ is unknown and thus $\mathcal{C}$ is unknown throughout training.} It is left for the algorithm developers to discover the shared class space as well as the subdomains in partial domain adaptation.

\myparagraph{Challenges}
From the viewpoint of statistical learning, the source and target domains are sampled from different distributions $p$ and $q$, $p\ne q$. PDA is to build a deep model that learns a transferable feature extractor $\mathbf{f} = F\left( {\bf{x}} \right)$ and an adaptive classifier $\hat{y} = G\left( {\bf{f}} \right)$ to reduce the distribution shift, such that the risk ${\mathbb{E} _{\left( {{\mathbf{x}},y} \right) \sim q}}\left[ {G \left( F({\mathbf{x}}) \right) \ne y} \right]$ is minimized by leveraging the source domain supervision.

In domain adaptation, a main challenge is that the source classifier cannot be directly applied to the target domain due to the distribution shift. In partial domain adaptation (PDA), with different class spaces between the source and target domains, a new challenge is to discover which part of the source class space is sharable with the unknown target class space $\mathcal{C}$. The challenges lead to two technical tasks. On the one hand, matching the whole distributions $p$ and $q$ as in standard domain adaptation will cause \emph{negative transfer} since a part of target class space $\mathcal{C}_t$ will be forcefully matched to the source outlier class space $\bar{\mathcal{C}}_s$. Thus, how to decrease the influence of the outlier source labeled data is crucial to partial domain adaptation. On the other hand, there is still a distribution shift between source and target domains in the shared class space, \emph{i.e.}, $p_{\mathcal{C}} \ne q$, where $p_{\mathcal{C}}$ denotes the distribution of the source domain associated with the shared class space $\mathcal{C}$. Reducing the distribution discrepancy between $p_{\mathcal{C}}$ and $q$ is still important to promoting \emph{positive transfer} of the source knowledge to the target domain.

\section{Theoretical Analysis}
\label{sec:theory}

Intuitively, if $\mathcal{C}$ is known, the PDA problem can be readily converted to a standard domain adaptation problem with $\mathcal{D}_s^{\mathcal{C}} = \{(\mathbf{x},y) \in \mathcal{D}_s  | y \in \mathcal{C}\}$ as the new source domain. We first define the $|\mathcal{C}_s|$-dimensional one-hot label $\dot{\mathbf{y}}$ for each target example. Then the \emph{transferable probability} over source classes for the target data can be estimated by $\dot{\mathbf{w}} = {\mathbb{E}_{({\mathbf{x}},y) \sim q}}\dot{\mathbf{y}} \in \mathbb{R}^{|\mathcal{C}_s|}$, which well reflects the information of the shared class space: $\forall c \in \mathcal{C}_s, \Pr(c \in \mathcal{C}) = \dot{\mathbf{w}}_c$.
However, due to the unsupervised nature of the PDA problem, the target domain label $\dot{\mathbf{y}}$ is unknown. Thus, we instead estimate the transferable probability of source classes by the prediction $\hat{\mathbf{y}}$ of target data, which is $\mathbf{w} = {\mathbb{E}_{({\mathbf{x}},y) \sim q}}\hat{\mathbf{y}}.$ A natural question now is how to bound the estimation error of $\mathbf{w}$, which is answered in Theorem~\ref{thm:bound}.

\begin{theorem}
	\label{thm:bound}
	Let $\Delta^{|\mathcal{C}_s|}$ be a $|\mathcal{C}_s|$-dimensional probability simplex $\Delta^{|\mathcal{C}_s|} = \{\mathbf{y} \in \mathbb{R}^{|\mathcal{C}_s|} | 0 \le y_i \le 1 (1 \le i \le |\mathcal{C}_s|) \land \sum_{j=1}^{|\mathcal{C}_s|} y_i= 1 \}$, $h : \mathbf{x} \mapsto \Delta^{|\mathcal{C}_s|}$ be a classifier that aims to classify source and target data with the correct label, $\phi(\hat{\mathbf{y}}) = \arg \max \hat{\mathbf{y}}$ be the $\arg \max$ operator over the prediction $\hat{\mathbf{y}} = h(\mathbf{x})$, $h_{\mathcal{C}}$ be the classifier confined within $\mathcal{C}$, $\mathcal{E}_p(h)= \mathbb{E}_{({\mathbf{x}},y)\sim p} \; \mathbf{1}(\phi(h(\mathbf{x})) \neq y)$ be the expected error of classifier $h$ with respect to the distribution $p$, $\mathcal{E}_{I}(h)= \mathbb{E}_{({\mathbf{x}},y)\sim q} \; \mathbf{1}(\phi(\hat{\mathbf{y}}) \notin \mathcal{C})$ be the expected error of classifying a target data point as a category in outlier source class space, $\delta (\hat{\mathbf{y}}) = 1 - \max \hat{\mathbf{y}}$ be the compliment of the classifier confidence value, and $\bar{\delta} = \mathbb{E}_{({\mathbf{x}},y)\sim q} \delta (\hat{\mathbf{y}})$ be the expected value of $\delta$ over the target distribution $q$, then the approximation error of the transferable probability in $L_1$ norm can be bounded as follows,
	\begin{equation}\label{eqn:bound}
		\Vert \dot{\mathbf{w}} - {\mathbf{w}} \Vert_1
		\le 2\bar{\delta} + 2 \mathcal{E}_{I}(h) + 2 \mathcal{E}_{p_{\mathcal{C}}}(h_{\mathcal{C}}) + 2 d_{\mathcal{H}\Delta\mathcal{H}}(p_{\mathcal{C}}, q).
	\end{equation}
\end{theorem}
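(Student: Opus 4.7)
The plan is to reduce the $L_1$ error between the two vector expectations to a pointwise expected $L_1$ distance, and then to split that pointwise distance via a hardened one-hot prediction. By Jensen's inequality for the $L_1$ norm,
\begin{equation*}
\|\dot{\mathbf{w}} - \mathbf{w}\|_1 = \bigl\|\mathbb{E}_q[\dot{\mathbf{y}} - \hat{\mathbf{y}}]\bigr\|_1 \le \mathbb{E}_q\bigl[\|\dot{\mathbf{y}} - \hat{\mathbf{y}}\|_1\bigr].
\end{equation*}
I would then introduce $\hat{\mathbf{y}}^{\star}$, the one-hot indicator at coordinate $\phi(\hat{\mathbf{y}})$, and apply the triangle inequality $\|\dot{\mathbf{y}} - \hat{\mathbf{y}}\|_1 \le \|\dot{\mathbf{y}} - \hat{\mathbf{y}}^{\star}\|_1 + \|\hat{\mathbf{y}}^{\star} - \hat{\mathbf{y}}\|_1$. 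The two summands will absorb, respectively, a classification-error contribution and a confidence-gap contribution.

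\textbf{Two easy terms.} A quick simplex calculation gives $\|\hat{\mathbf{y}}^{\star} - \hat{\mathbf{y}}\|_1 = 2\delta(\hat{\mathbf{y}})$: the coordinate at $\phi(\hat{\mathbf{y}})$ contributes $1 - \max \hat{\mathbf{y}} = \delta(\hat{\mathbf{y}})$, and the off-max coordinates of $\hat{\mathbf{y}}$ must sum to $\delta(\hat{\mathbf{y}})$ as well because $\hat{\mathbf{y}}$ lies on the simplex. Taking expectations under $q$ yields the $2\bar{\delta}$ summand. For $\|\dot{\mathbf{y}} - \hat{\mathbf{y}}^{\star}\|_1$, both vectors are one-hot, so the distance is either $0$ or $2$, giving $\mathbb{E}_q\|\dot{\mathbf{y}} - \hat{\mathbf{y}}^{\star}\|_1 = 2\Pr_q[\phi(\hat{\mathbf{y}}) \neq y]$. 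What remains is to bound this misclassification probability on the target.

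\textbf{Splitting the misclassification event.} I would partition $\{\phi(\hat{\mathbf{y}}) \neq y\}$ under $q$ into the ``predicts outlier'' event $\{\phi(\hat{\mathbf{y}}) \notin \mathcal{C}\}$ and the ``wrong inside $\mathcal{C}$'' event $\{\phi(\hat{\mathbf{y}}) \in \mathcal{C},\ \phi(\hat{\mathbf{y}}) \neq y\}$. The first contributes exactly $\mathcal{E}_I(h)$ by definition. On the second, $h$ acts as its restriction $h_{\mathcal{C}}$, so its probability is at most $\mathcal{E}_q(h_{\mathcal{C}})$. I then invoke the standard $\mathcal{H}\Delta\mathcal{H}$ domain-adaptation inequality between $p_{\mathcal{C}}$ and $q$ (both supported on the shared label set $\mathcal{C}$),
\begin{equation*}
\mathcal{E}_q(h_{\mathcal{C}}) \le \mathcal{E}_{p_{\mathcal{C}}}(h_{\mathcal{C}}) + d_{\mathcal{H}\Delta\mathcal{H}}(p_{\mathcal{C}}, q),
\end{equation*}
and collecting all pieces, multiplied through by $2$, yields exactly the stated bound.

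\textbf{Main obstacle.} The delicate step is the last one. The classical Ben-David style bound carries an additive ideal-joint-risk term $\lambda$ which does not appear in \eqref{eqn:bound}, and the usual form contains a factor of $\tfrac{1}{2}$ in front of $d_{\mathcal{H}\Delta\mathcal{H}}$ depending on the normalization convention. To justify the exact form in the theorem, I would appeal to the PDA structural assumption $\mathcal{C}_t \subseteq \mathcal{C}_s$: since $p_{\mathcal{C}}$ and $q$ share the same label space $\mathcal{C}$, a common near-optimal hypothesis makes $\lambda$ absorbable into $\mathcal{E}_{p_{\mathcal{C}}}(h_{\mathcal{C}})$, and the constants can be aligned by using the variant of $d_{\mathcal{H}\Delta\mathcal{H}}$ that omits the factor $\tfrac{1}{2}$. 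The remaining manipulations are elementary bookkeeping on one-hot vectors and simplex-valued expectations.
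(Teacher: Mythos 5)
Your proof is correct and follows essentially the same route as the paper's: both reduce $\Vert\dot{\mathbf{w}}-\mathbf{w}\Vert_1$ via Jensen to a pointwise bound of the form $2\delta(\hat{\mathbf{y}})+2\,\mathbf{1}(\phi(\hat{\mathbf{y}})\neq y)$, then split the misclassification event into the Type~I (outlier-prediction) part $\mathcal{E}_I(h)$ and the within-$\mathcal{C}$ part, bounded by $\mathcal{E}_q(h_{\mathcal{C}})\le\mathcal{E}_{p_{\mathcal{C}}}(h_{\mathcal{C}})+d_{\mathcal{H}\Delta\mathcal{H}}(p_{\mathcal{C}},q)$. The only cosmetic difference is that you reach the pointwise bound by a triangle inequality through the hardened one-hot prediction $\hat{\mathbf{y}}^{\star}$ rather than the paper's case analysis on whether $\phi(\dot{\mathbf{y}})=\phi(\hat{\mathbf{y}})$, and your caveat about the ideal-joint-risk term is one the paper implicitly shares, since it also cites the divergence lemma in the $\lambda$-free form.
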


\begin{proof}
	The inequality can be derived as follows,
	\begin{equation*}
		\begin{aligned}
			\Vert \dot{\mathbf{w}} - {\mathbf{w}} \Vert_1 & = \Vert {\mathbb{E}_{({\mathbf{x}},y)\sim q}} \dot{\mathbf{y}}  - {\mathbb{E}_{({\mathbf{x}},y) \sim q}} \hat{\mathbf{y}} \Vert_1                               \\
			                                              & = \Vert {\mathbb{E}_{({\mathbf{x}},y)\sim q}} \; (\dot{\mathbf{y}}  -  \hat{\mathbf{y}} ) \Vert_1                                                               \\
			\text{use} \; \text{a)}                       & \le \mathbb{E}_{({\mathbf{x}},y)\sim q} \Vert \dot{\mathbf{y}} - \hat{\mathbf{y}} \Vert_1                                                                       \\
			\text{use} \; \text{b)}                       & = \mathbb{E}_{({\mathbf{x}},y)\sim q} \; \mathbf{1}(\phi(\dot{\mathbf{y}}) = \phi(\hat{\mathbf{y}})) \Vert \dot{\mathbf{y}} - \hat{\mathbf{y}} \Vert_1          \\
			                                              & \quad + \mathbb{E}_{({\mathbf{x}},y)\sim q} \; \mathbf{1}(\phi(\dot{\mathbf{y}}) \neq \phi(\hat{\mathbf{y}})) \Vert \dot{\mathbf{y}} - \hat{\mathbf{y}} \Vert_1 \\
			\text{use} \; \text{c)}                       & \le \mathbb{E}_{({\mathbf{x}},y)\sim q} \; 2 \delta (\hat{\mathbf{y}}) \mathbf{1}(\phi(\dot{\mathbf{y}}) = \phi(\hat{\mathbf{y}}))                              \\
			                                              & \quad + \mathbb{E}_{({\mathbf{x}},y)\sim q} \; 2 \mathbf{1}(\phi(\dot{\mathbf{y}}) \neq \phi(\hat{\mathbf{y}}))                                                 \\
			\text{use} \; \text{d)}                       & \le 2\bar{\delta} + 2 \mathbb{E}_{({\mathbf{x}},y)\sim q} \; \mathbf{1}(\phi(\dot{\mathbf{y}}) \neq \phi(\hat{\mathbf{y}}))                                     \\
			\text{use} \; \text{b)}                       & = 2\bar{\delta} + 2 \mathbb{E}_{({\mathbf{x}},y)\sim q} \; \mathbf{1}(\phi(\hat{\mathbf{y}}) \notin \mathcal{C})                                                \\
			                                              & \quad + 2 \mathbb{E}_{({\mathbf{x}},y)\sim q} \; \mathbf{1}(\phi(\hat{\mathbf{y}}) \in \mathcal{C},  \phi(\dot{\mathbf{y}}) \neq \phi(\hat{\mathbf{y}}))        \\
			                                              & = 2\bar{\delta} + 2 \mathcal{E}_{I}(h)  + 2 \mathcal{E}_q(h_{\mathcal{C}})                                                                                      \\
			\text{use} \; \text{e)}                       & \le 2\bar{\delta} + 2 \mathcal{E}_{I}(h) + 2 \mathcal{E}_{p_{\mathcal{C}}}(h_{\mathcal{C}}) + 2 d_{\mathcal{H}\Delta\mathcal{H}}(p_{\mathcal{C}}, q)
		\end{aligned} \\
	\end{equation*}
	where a/b/c/d/e refers to the following facts:
	\begin{enumerate}[a)]
		\item The expectation inequality $\Vert \mathbb{E} x \Vert_1 \le \mathbb{E} \Vert x \Vert_1 $.
		\item The identity of the indicator function $\mathbb{E} x =  \mathbb{E} \mathbf{1}(b)x + \mathbb{E} \mathbf{1}(\lnot b)x $, where $b$ is a boolean expression and $\lnot b$ is its negation. The identity is frequently used in the analysis of martingale~\cite{doob_regularity_1940} (a category of stochastic process rooted in fair games).
		\item The property of the probability simplex: for any one-hot label $\dot{\mathbf{y}}$ and a prediction $\hat{\mathbf{y}} \in \Delta^{|\mathcal{C}_s|}$, their difference is $ \Vert \dot{\mathbf{y}} - \hat{\mathbf{y}} \Vert_1  =
			      \begin{cases}
				      2 \delta (\hat{\mathbf{y}}),                           & \phi(\dot{\mathbf{y}}) = \phi(\hat{\mathbf{y}})    \\
				      2 - 2 \hat{\mathbf{y}}_{\phi(\dot{\mathbf{y}})} \le 2, & \phi(\dot{\mathbf{y}}) \neq \phi(\hat{\mathbf{y}}) \\
			      \end{cases}$ \\
		\item For a positive random variable $x$ and a boolean expression $b$ defined over $x$, $\mathbb{E} \mathbf{1}(b)x \le \mathbb{E} x$.
		\item Lemma 3 from theory \cite{cite:tltheory2010}: the error of a classifier over a distribution $q$ can be bounded by its performance over another distribution $p$ and the distribution divergence: $\mathcal{E}_q(h) \le \mathcal{E}_p(h) + d_{\mathcal{H}\Delta\mathcal{H}}(p, q)$.
	\end{enumerate}
\end{proof}

\myparagraph{Explanation of the upper bound} Theorem~\ref{thm:bound} is intuitively easy to understand. We explain the practical meaning of each term in the upper bound:

\begin{itemize}
	\item $\bar{\delta} = \mathbb{E}_{({\mathbf{x}},y)\sim q} \delta (\hat{\mathbf{y}})$ is the expectation of the compliment of classifier confidence. This term sheds a light on the practice of reducing target uncertainty.

	\item $\mathcal{E}_{I}(h) = \mathbb{E}_{({\mathbf{x}},y)\sim q} \; \mathbf{1}(\phi(\hat{\mathbf{y}}) \notin \mathcal{C})$ is the probability of the classifier predicting a category that is not in the shared class space $\mathcal{C}$. The error of a classifier on the target data can be divided into two parts: Type I error, the prediction is not in $\mathcal{C}$ (thus it is sure to be wrong); Type II error, the prediction is in $\mathcal{C}$ but the predicted label is wrong. The term $\mathcal{E}_{I}(h)$ exactly quantifies the degree of the Type I error. This motivates us to confine the classifier in the shared class space in Section~\ref{sec:class_sel}.

	\item $ \mathcal{E}_{p_{\mathcal{C}}}(h_{\mathcal{C}}) $ is the performance of the classifier in the source domain within the shared class space $\mathcal{C}$. This term is usually small because the classifier is trained on source labeled data supervisedly.

	\item $d_{\mathcal{H}\Delta\mathcal{H}}(p_{\mathcal{C}}, q)$ measures the distribution discrepancy between the source and target domains in $\mathcal{C}$. The discrepancy can be reduced by adversarial domain alignment with instance selection in Section~\ref{sec:instance_sel}.

\end{itemize}

\begin{figure}[htbp]
	\centering
	\subfigure[Data of toy experiment]{\includegraphics[width=.48\columnwidth]{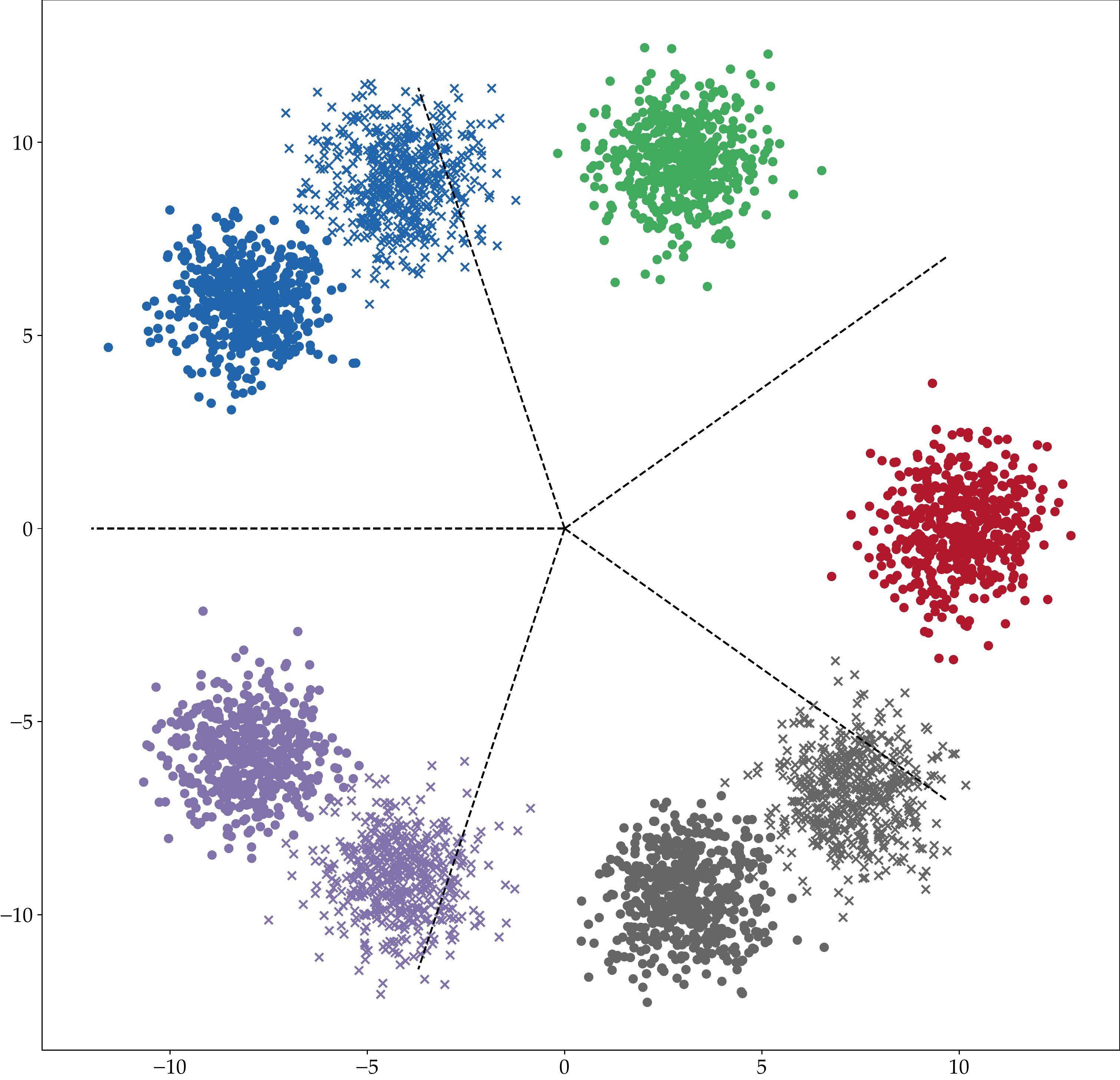}\label{fig:toy_data}}
	\subfigure[Value of terms in the bound]{\includegraphics[width=.48\columnwidth]{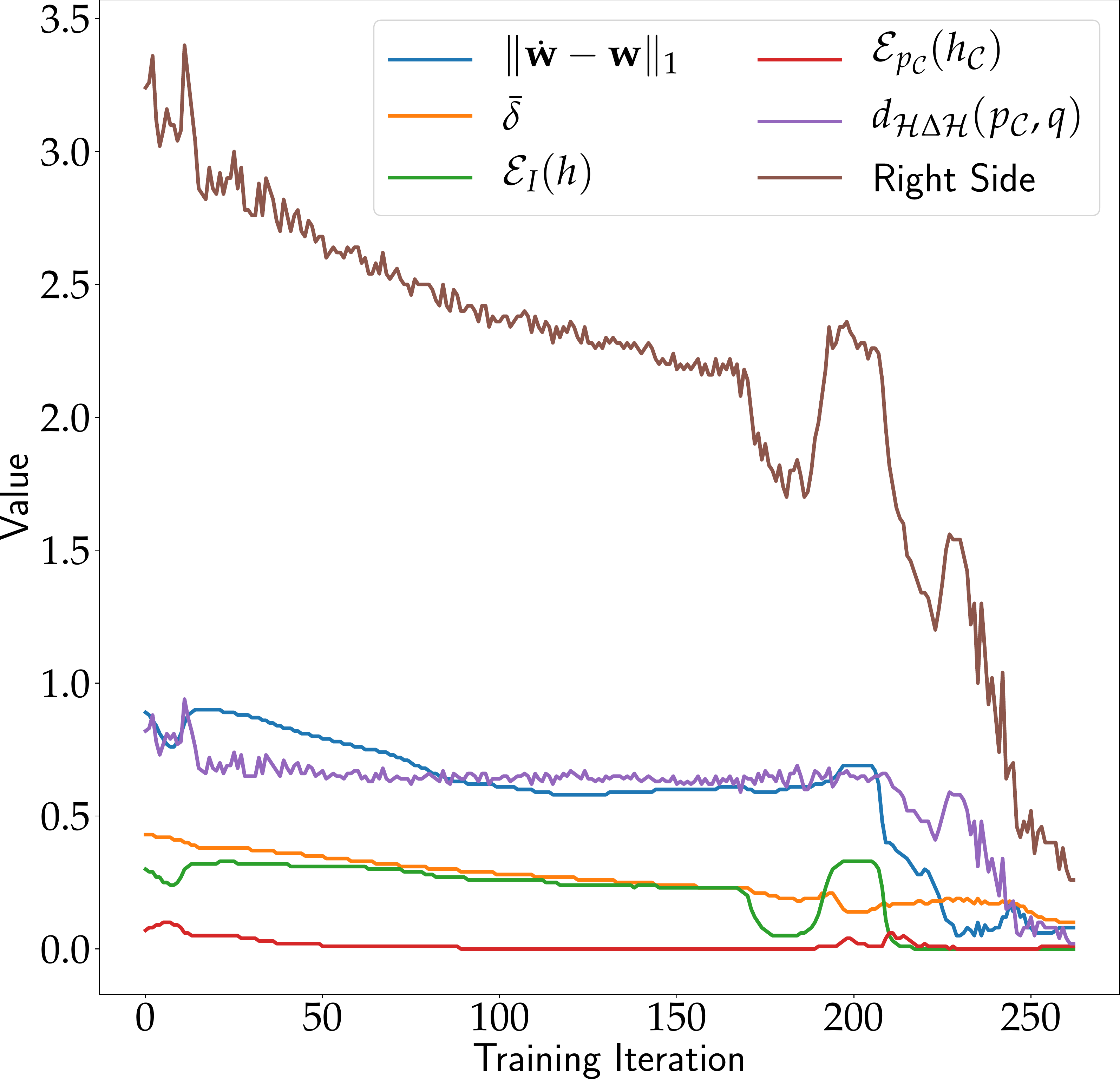}\label{fig:toy_result}}
	\vspace{-5pt}
	\caption{(a) shows the data in the toy experiment. The source domain data are shown by the marker '$\cdot$' and the target domain data are shown by the marker '$\times$'. Different colors indicate the samples of different classes. There are five classes in the source domain and three classes in the target domain, where the source class space subsumes the target class space. The black dashed lines show the optimal decision boundary for the source domain. (b) The change of the value of each term in the bound during the training process, and the sum of the right side in Eqn.~\eqref{eqn:bound}.}
	\label{fig:tightness_bound}
\end{figure}

\myparagraph{The tightness of the upper bound}
Overall, the proposed SAN++ in Section~\ref{sec:sanf} can efficiently reduce all the terms in the upper bound. We simulate the change of the value of each term in Eqn.~\eqref{eqn:bound} in the training process. We collect a dataset consisting of 2D points, where the source domain has $5$ classes and the target domain has $3$ classes (included in the $5$ classes). The distribution of the data is shown in Fig.~\ref{fig:toy_data}. We train SAN++ on this dataset with a two-layer fully-connected network for feature extraction, one fully-connected layer as the classifier and one-layer discriminator for each source class. The values of all the terms are shown in Fig.~\ref{fig:toy_result}. We can observe that the estimation error of the transferable probability on the left side decreases with the decrease of all the terms in the upper bound on the right side. This justifies that tightening the terms on the right side of the bound can substantially reduce the estimation error of the transferable probability on the left side of the bound.

\section{Selective Adversarial Network}
\label{sec:sanf}

Motivated by the theoretical analysis, we present Selective Adversarial Network (SAN) and its improved version (SAN++) with the bi-level selection mechanism. Their architectures are shown in Fig.~\ref{fig:SAN}.

\begin{figure*}[htbp]
	\centering
	\includegraphics[width=0.8\textwidth]{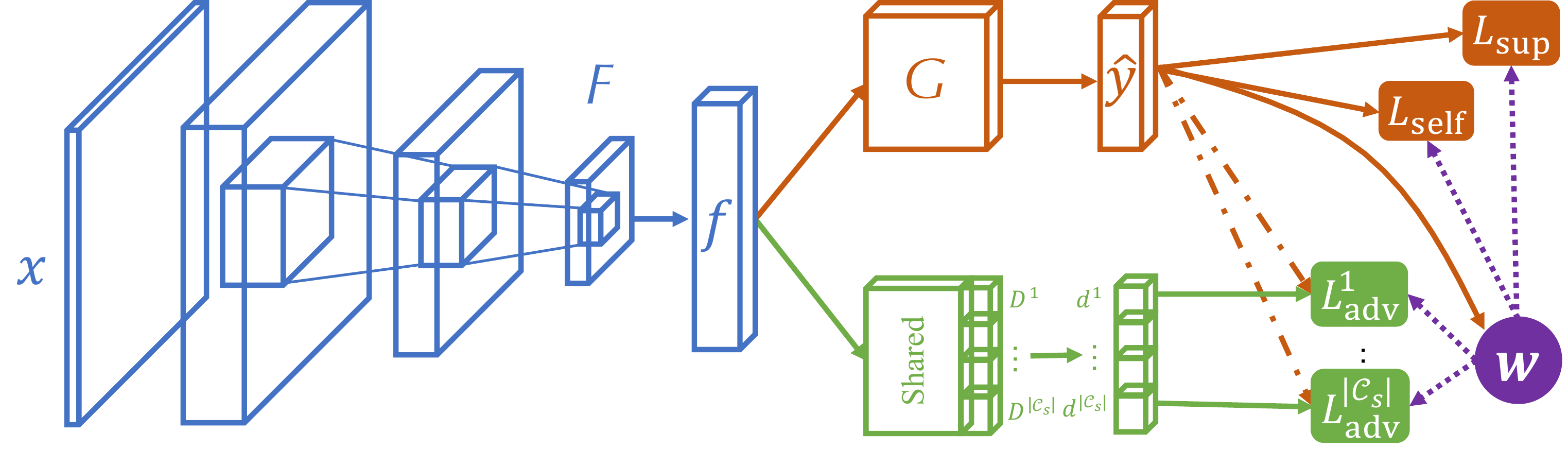}
	\caption{The architecture of Selective Adversarial Network (SAN++) for partial domain adaptation. $F$ is the feature extractor and $\mathbf{f}$ is the feature. $G$ is the classifier and ${\hat{\mathbf{y}}}$ is the predicted label. $D^k|_{k=1}^{|\mathcal{C}_s|}$ is the multi-task discriminator with shared bottom layers and $|\mathcal{C}_s|$ top heads, and ${\hat{d}^k}|_{k=1}^{|\mathcal{C}_s|}$ are the predicted domain labels. $L_\text{sup}$ and $L_\text{self}$, $L_\text{adv}^k|_{k=1}^{|\mathcal{C}_s|}$ are the source supervised training loss, the target self-training loss and the distribution alignment loss respectively. $\mathbf{w}$ denotes the class transferable probability. Solid arrows show data flow. Dashed arrows show instance selection and dotted arrows show class selection.}
	\label{fig:SAN}
\end{figure*}

\subsection{Instance Selection for Positive Transfer}
\label{sec:instance_sel}

\myparagraph{Multitask Discriminator}
We enable distribution alignment in light of the conditional domain adversarial network~\cite{NEURIPS2018_ab88b157}. Because of the class space mismatch, only one domain discriminator to match the whole source and target domains as prior works may cause negative transfer. As shown in Fig.~\ref{fig:SAN}, we instead design a \emph{multi-task discriminator} with shared bottom layers and $|\mathcal{C}_s|$ heads on the top. We use $D^k, {k=1, \ldots, |\mathcal{C}_s|}$ to denote a network chaining the shared layers and the $k$-th head. $D^k$ enables a conditional alignment of the source and target data associated with the $k$-th source class. Our conference version~\cite{cite:CVPR18SAN} uses completely separate discriminators for different classes, which is not parameter-efficient for source dataset of large-scale class space and easy to over-fit since the data in each class are much less than the whole dataset. We empirically show the efficiency of the multitask discriminator with shared bottom layers in Table~\ref{table:memory_time}.

\myparagraph{Instance Selection}
With the multitask discriminator, the main difficulty now is how to assign each target example to the correct domain alignment task, because the target domain is totally unlabeled. As demonstrated by prior works~\cite{saito2017asymmetric}, the predicted probability distribution over the source class space ${\hat{\bf{y}}} = G(F(\mathbf{x}))$ well characterizes the probability of assigning $\mathbf{x}$ to each of the $|\mathcal{C}_s|$ source classes. Thus, it is natural to use ${\hat{\bf{y}}}$ as the \emph{transferable probability} to assign each target instance $\mathbf{x}$ to the $|\mathcal{C}_s|$ alignment tasks. We embed this transferable probability as instance weight on the distribution alignment loss to enable instance selection as follows,
\begin{equation}\label{eqn:Ld}
	\sum\limits_{k = 1}^{|{\mathcal{C}_s}|} {\mathbb{E}_{{\mathbf{x}} \in  {{\mathcal{D}_s} \cup {\mathcal{D}_t}}} \left[{\hat {\bf y}^k \ell_\text{ce}( D^k\left( {{F}\left( {{{\mathbf{x}}}} \right)} ),d \right)} \right] } ,
\end{equation}
where $\ell_\text{ce}$ is the cross-entropy loss, $d$ is the domain label of $\mathbf{x}$ ($0$ for target data and $1$ for source data), and $\hat {\bf y}^k$ is the $k$-th entry of the prediction for $\bf{x}$.

If we had perfect label for each instance, then Eqn.~\eqref{eqn:Ld} will accurately align the feature distributions in a class-correspondence way, and ${d_{\mathcal{H}\Delta\mathcal{H}}}(p_c, q_c)$ will be reduced for any class $c\in\mathcal{C}$ including the term $d_{\mathcal{H}\Delta\mathcal{H}}(p_{\mathcal{C}}, q)$ in Theorem~\ref{thm:bound}. This promotes positive transfer across the source and target domains in the shared class space. The quality of the instance transferable probability can be improved during the training process since the classifier will be made more transferable due to the class-conditional domain alignment.

\subsection{Class Selection against Negative Transfer}
\label{sec:class_sel}

As shown in Section~\ref{sec:theory}, the transferable probability over source classes is derived as $\dot{\mathbf{w}} = {\mathbb{E}_{({\mathbf{x}},y) \sim q}}\dot{\mathbf{y}}$, which reflects the probability that a source class is in the shared class space $\mathcal{C}$. We use this transferable probability to select the classes within the shared class space. Since we have no labels for the target data, we can estimate the transferable probability for each class by the prediction of target data:
\begin{equation}
	\mathbf{w}= {\mathbb{E}_{{\mathbf{x}} \in \mathcal{D}_t}}\hat{\mathbf{y}}  =  {\mathbb{E}_{{\mathbf{x}} \in \mathcal{D}_t}}\left[G(F(\mathbf{x}))\right].
\end{equation}
The error of this estimation can be bounded as Eqn.~\eqref{eqn:bound}, and the improved SAN++, as introduced below, can substantially minimize this upper bound. Thus, $\mathbf{w}$ is an accurate estimate of the true class transferable probability $\dot{\mathbf{w}}$.

\myparagraph{Source Supervised Training}
To reduce the upper bound of the estimation error of $\dot{\mathbf{w}}$ in Eqn.~\eqref{eqn:bound}, we first consider the term $\mathcal{E}_{I}(h) = \mathbb{E}_{({\mathbf{x}},y)\sim q} \; \mathbf{1}(\phi(\hat{\mathbf{y}}) \notin \mathcal{C})$, which reflects the probability of predicting a label in the outlier source class space $\bar{\mathcal{C}}_s$, \emph{i.e.}, the Type I error. To decrease this term, we propose to confine the classifier in the shared class space $\mathcal{C}$ to decrease the training difficulty on source classes in $\bar{\mathcal{C}}_s$. This further reduces the probability of predicting a data point into $\bar{\mathcal{C}}_s$. Specifically, we incorporate the transferable probability $\mathbf{w}$ into the source classifier $G$ and derive a new selective classification loss as follows,
\begin{equation}\label{eqn:classifier}
	L_\text{sup} = \mathbb{E}_{({\mathbf{x}},y) \in {\mathcal{D}_s}} \left[{\mathbf{w}_{y} \bullet {\ell_\text{ce}}\left( {{G}\left( {{F}\left( {{\mathbf{x}}} \right)} \right)}, y \right)}\right],
\end{equation}
where $\mathbf{w}_{ y}$ is the $ y$-th entry of $\mathbf{w}$, indicating the probability of a label $y$ falling into the shared class space $\mathcal{C}$. The class selection on the source classifier enforces the model to focus on the source data from $\mathcal{C}$, which is well-motivated from our theoretical results to substantially reduce the estimation error of the class transferable probability.

\myparagraph{Target Self-training}
To reduce the term $\bar{\delta} = \mathbb{E}_{({\mathbf{x}},y)\sim q} \delta (\hat{\mathbf{y}})$, we need to reduce the uncertainty of target predictions. Self-training is a learning technique to reduce the prediction uncertainty, which iteratively assigns pseudo-labels to unlabeled data and re-trains the model with both labeled data and pseudo-labeled data. Self-training~\cite{xie2020self} is demonstrated to have fewer hyperparameters and easier to tune than other widely-used techniques to reduce uncertainty such as entropy minimization~\cite{cite:NIPS04SSLEM,cite:NIPS16RTN,cite:CVPR18SAN,cite:CVPR18IWAN}. Therefore, instead of using the entropy minimization as in our conference version~\cite{cite:CVPR18SAN}, we utilize self-training to regularize the predictions of the target domain. Different from prior self-training works, we firstly incorporate class selection into self-training and confine self-training into the shared class space $\mathcal{C}$ with the class transferable probability. This is because applying self-training on all the target data suffers from the Type I error introduced above. Specifically, after the convergence of the training on source labeled data, we assign soft pseudo-labels $\hat{y}_\text{self}$ to the target unlabeled data $\mathbf{x}$ with the classifier $G(F(\bullet))$ and then incorporate them into training:
\begin{equation}\label{eqn:self-training}
	L_\text{self} = \mathbb{E}_{{{\mathbf{x}}} \in {\mathcal{D}_t}} \left[\mathbf{w}_{\hat{y}_\text{self}} \bullet {\ell_\text{ce}}\left( {{G}\left( {{F}\left( {{\mathbf{x}}} \right)} \right)}, \hat{y}_\text{self} \right)\right].
\end{equation}
With a fixed number of epochs ($1$ in all of our experiments), we repeat the pseudo-labeling and self-training process to use the better SAN++ model to assign more accurate pseudo-labels. Self-training improves the prediction confidence and generates more accurate pseudo-labels without any hyperparameters, \emph{e.g.} confidence threshold, which makes $\bar{\delta} $ smaller and lowers the upper bound in Theorem~\ref{thm:bound}.

\myparagraph{Source-Target Adversarial Adaptation}
Section~\ref{sec:instance_sel} introduces the instance selection with a multi-task discriminator to reduce the distribution discrepancy, which reduces the term $d_{\mathcal{H}\Delta\mathcal{H}}(p_{\mathcal{C}}, q)$. However, only applying the instance selection will train all the distribution alignment tasks, undesirably including the tasks for source classes in $\bar{\mathcal{C}_s}$. However, for an outlier class $k \in \bar{\mathcal{C}_s}$, the corresponding discriminator $D^k$ just receives positive examples (source examples) and provides useless signals for the partial domain adaptation task. Such learning signals also update the feature extractor $F$, which may worsen the performance of the alignment tasks in the shared class space $\mathcal{C}$. Thus, we use class transferable probability $\mathbf{w}$ to alleviate the outlier influence. {Similar to CDAN~\cite{NEURIPS2018_ab88b157}, we prioritize the
discriminator on those easy-to-transfer examples with certain predictions by reweighting each training example of the conditional domain discriminator by an entropy-aware weight $w_e(\mathbf{x})=1+e^{-H(G(F(\mathbf{x})))}$, where $H$ is the entropy function. Thus, the final loss for source-target adversarial adaptation is defined as follows:
\begin{equation}\label{eqn:Ld2}
	% =  \sum\limits_{k = 1}^{|{\mathcal{C}_s}|} L_\text{adv}^k
	{L_\text{adv}} = \sum\limits_{k = 1}^{|{\mathcal{C}_s}|} \mathbb{E}_{{\mathbf{x}} \in  {{\mathcal{D}_s} \cup {\mathcal{D}_t}} } \left[\mathbf{w}_k \bullet w_e(\mathbf{x}) \hat {\bf y}^k \ell_\text{ce} ( {D^k\left( {{G}\left( {{{\mathbf{x}}}} \right)} \right),d} ) \right].
\end{equation}}
Applying the class transferable probability on the distribution alignment losses of different tasks can make the model focus on the distribution alignment tasks of classes in the shared class space $\mathcal{C}$. This correctly promotes the positive transfer and decreases or even removes the influence of the tasks responsible for outlier source class space $\bar{\mathcal{C}_s}$.

\subsection{Selective Adversarial Partial Domain Adaptation}
\label{sec:conclusion}

We use $ \theta_F $ to denote the parameters of the feature extractor $F$, $ \theta_G $ to denote the parameters of the label predictor $G$, $ \theta_{D} $ to denote all the parameters of the multitask discriminator including the shared bottom layers and the multiple top heads. Integrating all modules and losses, the objective of the Selective Adversarial Network (SAN++) is
\begin{equation}\label{eqn:MultiA}
	O\left( {\theta _F},{\theta_G},\theta_D \right) = L_\text{sup} + L_\text{self} - L_\text{adv}.
\end{equation}
The optimization problem is a min-max game to find the optimal parameters ${\hat\theta_F}$, ${\hat\theta_G}$ and ${\hat\theta_{D}}$ that satisfy
\begin{equation}\label{eqn:parameter1}
	\begin{aligned}
		({\hat\theta_F}, {\hat\theta_G}) & = \mathop {\arg \min }\limits_{{\theta _F},{\theta _G}} O\left( {{\theta _F},{\theta _G},\theta _{D}} \right), \\
		({\hat\theta_{D}})               & = \mathop {\arg \max }\limits_{{\theta_{D}}} O\left( {{\theta_F},{\theta _G},\theta_{D}} \right).
	\end{aligned}
\end{equation}
SAN++ is developed for partial domain adaptation with theoretical guarantees, which jointly circumvents negative transfer by ignoring the outlier source class space $\bar{\mathcal{C}_s}$ and its affiliated data, and promotes positive transfer by conditionally matching data distributions $p_{\mathcal{C}}$ and $q$ in the shared class space $\mathcal{C}$. SAN++ empowers a positive interplay between the estimation of the transferable probability and the learning of the partial domain adaptation model.

\section{Experiments on PDA}
We conduct extensive experiments on seven datasets to fully evaluate our approach for a variety of partial domain adaptation problems. All codes and datasets are available online at \url{https://github.com/thuml/Transfer-Learning-Library}.

\subsection{Datasets and Settings}

Six datasets are used in this paper. Table~\ref{tab:dataset} shows details about tasks and the target class space for each dataset.

\medskip\noindent\textbf{Office-31}~\cite{cite:ECCV10Office} consists of $4,652$
images in $31$ categories from three domains: \textit{Amazon} (\textbf{A}), \textit{Webcam} (\textbf{W}) and \textit{DSLR} (\textbf{D}). This dataset represents the performance on the setting where both source and target domains have a small class space.

\medskip\noindent\textbf{Office-Home}~\cite{cite:CVPR17OfficeHome} consists of 4 different domains: Artistic images (\textbf{Ar}), Clipart images (\textbf{Cl}), Product images (\textbf{Pr}) and Real-World images (\textbf{Rw}) in from $65$ categories. The dataset represents domains with different visual artifacts.

\medskip\noindent\textbf{ImageNet-Caltech} is constructed from \textit{ImageNet-1K}~\cite{cite:ILSVRC15} and \textit{Caltech-256}~\cite{griffin_caltech-256_2007}, which share $84$ common categories. Because pre-trained models are pre-trained on ImageNet training set, we adopt the validation set when ImageNet is used as the target domain and adopt the training set when ImageNet is used as the source domain. This setting represents partial domain adaptation with large-scale source class space.

\medskip\noindent\textbf{VisDA-2017}~\cite{peng_visda:_2018} has two domains: one consists of synthetic 2D renderings of 3D models and the other consists of real images associated with $12$ categories. This dataset aims to encourage the use of easily-accessible computer-generated data to improve performance with less labeling cost.

\medskip\noindent\textbf{Open MIC} ~\cite{koniusz2018museum} dataset contains photos of exhibits captured in $10$ distinct exhibition spaces of several museums. The source photos are taken in a controlled fashion while the target photos are taken by wearable cameras in an egocentric setup. We use the official $5$ target splits of train, validation and test and report the average performance. The dataset exhibits a more challenging classification task with a larger domain shift including motion blur, occlusions, varying viewpoints, color inconstancy, etc.

\medskip\noindent\textbf{Digits} dataset is composed of three standard digit classification datasets: MNIST~\cite{cite:IEEE98MNIST}, USPS~\cite{cite:TPAMI94USPS} and SVHN~\cite{cite:NIPS11SVHN}. Each dataset contains $10$ categories, ranging from $0$ to $9$. MNIST and USPS include grayscale handwritten digits captured under constrained conditions. SVHN dataset was constructed by cropping house numbers in RGB Google Street View images, which has more diversity.

\begin{table}[tbp]
	\centering
	\renewcommand\tabcolsep{1pt}
	\caption{Details of the datasets, the tasks, and the target class space for all PDA experiments. For each dataset, we use all classes as the source class space.}
	\label{tab:dataset}
	\resizebox{1\columnwidth}{!}{
		\begin{tabular}{ccc}
			\toprule
			Dataset                     & Tasks                                                                    & Target Class Space                           \\
			\midrule
			\makecell{Office-31} & \makecell{\textbf{A} $\rightarrow$ \textbf{W}, \textbf{D} $\rightarrow$ \textbf{W}, \\ \textbf{W} $\rightarrow$ \textbf{D}, \textbf{A} $\rightarrow$ \textbf{D}, \\ \textbf{D} $\rightarrow$ \textbf{A}, \textbf{W} $\rightarrow$ \textbf{A}} & \makecell{$10$ categories shared with \\ Caltech-256} \\
			\midrule
			\makecell{Office-Home} & \makecell{\textbf{Ar} $\rightarrow$ \textbf{Cl}, \textbf{Ar} $\rightarrow$ \textbf{Pr},\\ \textbf{Ar} $\rightarrow$ \textbf{Rw}, \textbf{Cl} $\rightarrow$ \textbf{Ar},\\ \textbf{Cl} $\rightarrow$ \textbf{Pr}, \textbf{Cl} $\rightarrow$ \textbf{Rw},\\ \textbf{Pr} $\rightarrow$ \textbf{Ar}, \textbf{Pr} $\rightarrow$ \textbf{Cl},\\ \textbf{Pr} $\rightarrow$ \textbf{Rw}, \textbf{Rw} $\rightarrow$ \textbf{Ar},\\ \textbf{Rw} $\rightarrow$ \textbf{Cl}, \textbf{Rw} $\rightarrow$ \textbf{Pr}} &  \makecell{the first $25$ categories \\ in alphabetic order} \\
			\midrule
			\makecell{ImageNet-Caltech} & \textbf{I} $\rightarrow$ \textbf{C}, \textbf{C} $\rightarrow$ \textbf{I} & \makecell{$84$ categories shared by          \\ ImageNet-1K and Caltech-256}\\
			\midrule
			\makecell{VisDA-2017}       & \textbf{VisDA-2017}                                                      & \makecell{the first $6$ categories           \\ in alphabetic order}  \\
			\midrule
			\\[-5pt]
			\makecell{OpenMIC}          & \textbf{OpenMIC}                                                         & \makecell{'Clk', 'Clv', 'Gls', 'Hon', 'Nat'} \\
			\\[-5pt]
			\midrule
			\makecell{Digits} & \makecell{\textbf{SVHN} $\rightarrow$ \textbf{MNIST},\\ \textbf{MNIST} $\rightarrow$ \textbf{USPS},\\ \textbf{USPS} $\rightarrow$ \textbf{MNIST}} & '0','1','2','3'\\
			\bottomrule
		\end{tabular}}
\end{table}

\subsection{Implementation Details}

To demonstrate the effectiveness of our approach, we compare it with the source-only  baseline \textbf{ResNet}~\cite{cite:CVPR16DRL}, deep domain adaptation methods: Residual Transfer
Network (\textbf{RTN})~\cite{cite:NIPS16RTN}, Joint Adaptation
Network (\textbf{JAN})~\cite{cite:ICML17JAN}, Central Moment Discrepancy (\textbf{CMD})~\cite{cite:ICLR17CMD}, Domain Adversarial Neural Network (\textbf{DANN})~\cite{cite:ICML15DANN}, Adversarial Discriminative Domain Adaptation (\textbf{ADDA})~\cite{cite:CVPR17ADDA}, and partial domain adaptation methods: Importance Weighted Adversarial Nets (\textbf{IWAN})~\cite{cite:CVPR18IWAN}, Partial Adversarial Domain Adaptation (\textbf{PADA})~\cite{cao_partial_2018}, Deep Residual Correction Network (\textbf{DRCN})~\cite{li2020deep} and Example Transfer Network (\textbf{ETN})~\cite{cao_learning_2019}. We also compare with our conference version (\textbf{SAN})~\cite{cite:CVPR18SAN}, and use \textbf{SAN++} to refer to our journal version of SAN.

We implement all deep methods based on \textbf{PyTorch}, and fine-tune from ResNet-50~\cite{cite:CVPR16DRL} pre-trained on ImageNet. We add a bottleneck layer before the classifier layer as DANN \cite{cite:ICML15DANN} except for the task I$\rightarrow$C because the task can fully exploit the advantage of the original feature and classifier layers in the ImageNet pre-trained model. We fine-tune all the feature layers and train the bottleneck layer, the classifier layer and the multitask discriminator. Since the new layers are trained from scratch, we set their learning rate to be $10$ times of the pre-trained layers. We use mini-batch stochastic gradient descent (SGD) with momentum of $0.9$ and the learning rate annealing strategy of the following formula: $\eta_p = \frac{\eta_0}{{(1+\alpha p)}^\beta}$ as DANN, where $p$ is the training progress linearly changing from $0$ to $1$, $\eta_0 $ is the initial learning rate, $\alpha = 10$ and $\beta = 0.75$. The penalty of adversarial learning is increased from $0$ to $1$ gradually as DANN. Initial learning rates are selected by Deep Embedded Validation~\cite{you_towards_2019}. For MMD-based methods (RTN and JAN), we use Gaussian kernel with bandwidth set to median pairwise squared distances on training data, \emph{i.e.} median heuristic~\cite{cite:NIPS12MKMMD}.

\subsection{Results}
\label{sec:results}

This section shows experimental results on closed-set domain adaptation and partial domain adaptation. Although our methods (SAN/SAN++) are designed for partial domain adaptation, they can also deal with closed-set domain adaptation, which is a special case of partial domain adaptation.

\begin{table}[htbp]
	\addtolength{\tabcolsep}{-5pt}
	\centering
	\caption{Accuracy (\%) of the closed-set domain adaptation tasks on \emph{Office-31} (ResNet-50).}
	\label{table:close_accuracy_office}
	\resizebox{1\columnwidth}{!}{
		\begin{tabular}{lccccccc}
			\toprule
			\multirow{2}{30pt}{Method} & \multicolumn{7}{c}{Office-31} \\
			\cmidrule{2-8}
			                                 & A $\rightarrow$ W & D $\rightarrow$ W & W $\rightarrow$ D & A $\rightarrow$ D & D $\rightarrow$ A & W $\rightarrow$ A & Avg            \\
			\midrule
			ResNet~\cite{cite:CVPR16DRL}     & 68.43             & 96.75             & 99.36             & 68.95             & 62.54             & 60.72             & 76.14          \\
			RTN~\cite{cite:NIPS16RTN}        & 84.56             & 96.85             & 99.48             & 77.59             & 66.26             & 64.88             & 81.67          \\
			JAN~\cite{cite:ICML17JAN}        & 85.45             & 97.46             & 99.84             & 84.77             & 68.63             & 70.05             & 84.35          \\
			CMD~\cite{cite:ICLR17CMD}        & 84.03             & 98.24             & 99.26             & 86.62             & \textbf{72.81}    & 70.31             & 85.21          \\
			DANN~\cite{cite:ICML15DANN}      & 82.03             & 96.95             & 99.14             & 79.76             & 68.25             & 67.47             & 82.25          \\
			ADDA~\cite{cite:CVPR17ADDA}      & 86.22             & 96.21             & 98.49             & 77.84             & 69.52             & 68.96             & 82.94          \\
			CDAN~\cite{NEURIPS2018_ab88b157} & 94.13             & 98.62             & \textbf{100.00}   & 92.94             & 70.99             & 69.28             & 87.71          \\
			%LEL~\cite{cite:NIPS17LEL} & 79.66 & 89.49 & 94.90 & 80.25 & 87.89 & 78.08 & 85.05 \\
			\midrule
			SAN++ w/ DANN                    & 92.94             & 98.56             & \textbf{100.00}   & 92.78             & 71.92             & 70.65             & 87.81          \\
			{SAN++ w/ CDAN}         & \textbf{95.23}    & \textbf{98.79}    & \textbf{100.00}   & \textbf{94.01}    & {72.45}           & \textbf{71.87}    & \textbf{88.73} \\
			\bottomrule
		\end{tabular}
	}
\end{table}

\begin{table*}[h]
	\addtolength{\tabcolsep}{-3pt}
	\centering
	\captionsetup{justification=centering}
	\caption{Accuracy (\%) of partial domain adaptation tasks on \emph{Office-Home} (ResNet-50).}
	\vspace{-5pt}
	\label{table:accuracy_officehome}
	\resizebox{\textwidth}{!}{%
		\begin{tabular}{lccccccccccccc}
			\toprule
			\multirow{2}{30pt}{Method} & \multicolumn{13}{c}{Office-Home} \\
			\cmidrule{2-14}
			                             & {Ar}$\rightarrow${Cl} & {Ar}$\rightarrow${Pr} & {Ar}$\rightarrow${Rw} & {Cl}$\rightarrow${Ar} & {Cl}$\rightarrow${Pr} & {Cl}$\rightarrow${Rw} & {Pr}$\rightarrow${Ar} & {Pr}$\rightarrow${Cl} & {Pr}$\rightarrow${Rw} & {Rw}$\rightarrow${Ar} & {Rw}$\rightarrow${Cl} & {Rw}$\rightarrow${Pr} & Avg            \\
			\midrule
			ResNet~\cite{cite:CVPR16DRL} & 46.33                 & 67.51                 & 75.87                 & 59.14                 & 59.94                 & 62.73                 & 58.22                 & 41.79                 & 74.88                 & 67.40                 & 48.18                 & 74.17                 & 61.35          \\
			RTN~\cite{cite:NIPS16RTN}    & 49.31                 & 57.70                 & 80.07                 & 63.54                 & 63.47                 & 73.38                 & 65.11                 & 41.73                 & 75.32                 & 63.18                 & 43.57                 & 80.50                 & 63.07          \\
			JAN~\cite{cite:ICML17JAN}    & 47.61                 & 58.66                 & 67.84                 & 50.81                 & 53.06                 & 61.33                 & 53.11                 & 44.87                 & 68.56                 & 61.00                 & 51.79                 & 69.53                 & 57.35          \\
			DANN~\cite{cite:ICML15DANN}  & 43.76                 & 67.90                 & 77.47                 & 63.73                 & 58.99                 & 67.59                 & 56.84                 & 37.07                 & 76.37                 & 69.15                 & 44.30                 & 77.48                 & 61.72          \\
			IWAN~\cite{cite:CVPR18IWAN}  & 53.94                 & 54.45                 & 78.12                 & 61.31                 & 47.95                 & 63.32                 & 54.17                 & 52.02                 & 81.28                 & 76.46                 & 56.75                 & 82.90                 & 63.56          \\
			DRCN~\cite{li2020deep}       & 51.60                 & 75.80                 & 82.00                 & 62.90                 & 65.10                 & 72.90                 & 67.40                 & 50.00                 & 81.00                 & 76.40                 & {57.70}               & 79.30                 & 68.50          \\
			ETN~\cite{cao_learning_2019} & {59.24}               & {77.03}               & 79.54                 & 62.92                 & 65.73                 & 75.01                 & {68.29}               & {55.37}               & 84.37                 & 75.72                 & 57.66                 & {84.54}               & 70.45          \\
			\midrule
			SAN~\cite{cite:CVPR18SAN}    & 44.42                 & 68.68                 & 74.60                 & 67.49                 & 64.99                 & 77.80                 & 59.78                 & 44.72                 & 80.07                 & 72.18                 & 50.21                 & 78.66                 & 65.30          \\
			\midrule
			SAN++                        & \textbf{61.25}        & \textbf{81.57}        & \textbf{88.57}        & \textbf{72.82}        & \textbf{76.41}        & \textbf{81.94}        & \textbf{74.47}        & \textbf{57.73}        & \textbf{87.24}        & \textbf{79.71}        & \textbf{63.76}        & \textbf{86.05}        & \textbf{75.96} \\
			\bottomrule
		\end{tabular}
	}
\end{table*}

\subsubsection{Closed-Set Domain Adaptation}
\label{sec:exp_close}

We first show that SAN++ performs comparably to prior works for closed-set domain adaptation. We conduct the experiment on Office-31. {We use both DANN~\cite{cite:ICML15DANN} and CDAN~\cite{NEURIPS2018_ab88b157} to perform distribution alignment for SAN++ in this experiment and we conduct experiments on the Office-31 dataset.} Since the other partial domain adaptation methods usually focus on the problem of outlier source class space, we do not compare with them in this experiment.

{As shown in Table~\ref{table:close_accuracy_office}, we can observe that in closed-set domain adaptation, SAN++ with DANN and with CDAN both perform comparably to the corresponding domain adaptation methods that use a similar distribution alignment module. The results demonstrate that the mechanism to filter out outlier source samples in SAN++ does not hurt its performance for closed-set domain adaptation.}

\begin{table}[htbp]
	\addtolength{\tabcolsep}{-4.1pt}
	\centering
	\caption{Accuracy (\%) of partial domain adaptation tasks on \emph{Office-31} (ResNet-50).}
	\vspace{-5pt}
	\label{table:accuracy_office}
	\resizebox{1\columnwidth}{!}{
		\begin{tabular}{lccccccc}
			\toprule
			\multirow{2}{30pt}{Method} & \multicolumn{7}{c}{Office-31} \\
			\cmidrule{2-8}
			                             & A $\rightarrow$ W & D $\rightarrow$ W & W $\rightarrow$ D & A $\rightarrow$ D & D $\rightarrow$ A & W $\rightarrow$ A & Avg            \\
			\midrule
			ResNet~\cite{cite:CVPR16DRL} & 75.59             & 96.27             & 98.09             & 83.44             & 83.92             & 84.97             & 87.05          \\
			RTN~\cite{cite:NIPS16RTN}    & 78.98             & 93.22             & 85.35             & 77.07             & 89.25             & 89.46             & 85.56          \\
			JAN~\cite{cite:ICML17JAN}    & 80.32             & 94.55             & 89.23             & 84.56             & 90.74             & 90.23             & 88.27          \\
			CMD~\cite{cite:ICLR17CMD}    & 79.12             & 93.98             & 90.12             & 81.43             & 87.65             & 89.01             & 86.89          \\
			DANN~\cite{cite:ICML15DANN}  & 73.56             & 96.27             & 98.73             & 81.53             & 82.78             & 86.12             & 86.50          \\
			ADDA~\cite{cite:CVPR17ADDA}  & 75.67             & 95.38             & 99.85             & 83.41             & 83.62             & 84.25             & 87.03          \\
			%LEL~\cite{cite:NIPS17LEL} & 79.66 & 89.49 & 94.90 & 80.25 & 87.89 & 78.08 & 85.05 \\
			IWAN~\cite{cite:CVPR18IWAN}  & 89.15             & 99.32             & 99.36             & 90.45             & 95.62             & 94.26             & 94.69          \\
			PADA~\cite{cao_partial_2018} & 86.54             & 99.32             & \textbf{100.00}   & 82.17             & 92.69             & 95.41             & 92.69          \\
			DRCN~\cite{li2020deep}       & 90.80             & \textbf{100.00}   & \textbf{100.00}   & 94.30             & 95.20             & 94.80             & 95.90          \\
			ETN~\cite{cao_learning_2019} & 94.52             & \textbf{100.00}   & \textbf{100.00}   & 95.03             & \textbf{96.21}    & 94.64             & 96.73          \\
			\midrule
			SAN~\cite{cite:CVPR18SAN}    & 93.90             & 99.32             & 99.36             & 94.27             & 94.15             & 88.73             & 94.96          \\
			\midrule
			SAN++                        & \textbf{99.66}    & \textbf{100.00}   & \textbf{100.00}   & \textbf{98.09}    & 94.05             & \textbf{95.51}    & \textbf{97.89} \\
			\bottomrule
		\end{tabular}}
\end{table}

\begin{table}[htbp]
	\addtolength{\tabcolsep}{3pt}
	\centering
	\caption{Accuracy (\%) of partial domain adaptation tasks on \emph{VisDA-2017}, \emph{ImageNet-Caltech} and \emph{OpenMIC} (ResNet-50).}
	\vspace{-5pt}
	\label{table:accuracy_visic}
	\resizebox{1\columnwidth}{!}{
		\begin{tabular}{lcccc}
			\toprule
			\multirow{2}{30pt}{Method} & \multirow{2}{30pt}{\centering VisDA2017} & \multicolumn{2}{c}{ImageNet-Caltech} & \multirow{2}{30pt}{Open MIC}\\
			\cmidrule(lr){3-4}
			                             &                & I $\rightarrow$ C & C $\rightarrow$ I &                \\
			\midrule
			ResNet~\cite{cite:CVPR16DRL} & 45.26          & 69.69             & 71.29             & 71.45          \\
			RTN~\cite{cite:NIPS16RTN}    & 50.04          & 75.50             & 66.21             & 61.34          \\
			JAN~\cite{cite:ICML17JAN}    & 50.38          & 72.16             & 62.47             & 63.91          \\
			DANN~\cite{cite:ICML15DANN}  & 51.01          & 70.80             & 67.71             & 57.01          \\
			IWAN~\cite{cite:CVPR18IWAN}  & 48.62          & 78.06             & 73.33             & 71.70          \\
			PADA~\cite{cao_partial_2018} & 53.53          & 75.03             & 70.48             & 64.56          \\
			DRCN~\cite{li2020deep}       & 58.20          & 75.30             & 78.90             & 74.67          \\
			ETN~\cite{cao_learning_2019} & 57.86          & 83.23             & 74.93             & 75.54          \\
			\midrule
			SAN~\cite{cite:CVPR18SAN}    & 49.92          & 77.75             & 75.26             & 73.49          \\
			\midrule
			SAN++                        & \textbf{63.06} & \textbf{83.34}    & \textbf{81.07}    & \textbf{77.56} \\
			\bottomrule
		\end{tabular}}
\end{table}

\subsubsection{Partial Domain Adaptation}
\label{sec:exp_pda}

We then compare the performance of different methods for partial domain adaptation. {Prior works show that more advanced distribution alignment method, CDAN, performs worse than DANN on partial domain adaptation tasks~\cite{liang2020baus}, so we only use DANN as the distribution alignment module in this experiment.} We conduct experiments on a wide range of datasets with different kinds of domain shifts. We show the classification accuracy on Office-Home in Table~\ref{table:accuracy_officehome}, Office-31 in Table~\ref{table:accuracy_office} and VisDA/ImageNet-Caltech/OpenMIC in Table~\ref{table:accuracy_visic}. We can observe that SAN++ consistently outperforms all the domain adaptation and partial domain adaptation methods on nearly all the datasets. In particular, SAN++ outperforms the state-of-the-art partial domain adaptation methods: DRCN and ETN, which justifies that SAN++ maximally downweights the outlier-class examples and upweights the shared-class examples. By introducing the class selection on all training losses of source supervised training, target self-training and source-target adversarial adaptation, SAN++ significantly outperforms SAN.

In addition, we find that some closed-set domain adaptation methods perform even worse than ResNet trained solely on source data. This is because they all assume identical class space and match the whole source distribution with the target distribution. The assumption does not hold in partial domain adaptation and negative transfer degrades their performance.

\begin{table}[htbp]
	\addtolength{\tabcolsep}{-3pt}
	\centering
	\caption{Accuracy (\%) of SAN++, SAN and pixel-level domain adaptation method on \emph{Digits}.}
	\vspace{-5pt}
	\label{table:digit}
	\resizebox{\columnwidth}{!}{
		\begin{tabular}{lccc}
			\toprule
			Method                              & USPS $\rightarrow$ MNIST & MNIST  $\rightarrow$ USPS & SVHN  $\rightarrow$ MNIST \\
			\midrule
			GTA~\cite{cite:CVPR18GenerateAdapt} & 50.47                    & 33.72                     & 56.71                     \\
			SAN~\cite{cite:CVPR18SAN}           & 91.28                    & 94.56                     & 66.84                     \\
			SAN++                               & \textbf{94.65}           & \textbf{99.23}            & \textbf{68.47}            \\
			\bottomrule
		\end{tabular}}
\end{table}

We further compare SAN++ with GTA~\cite{cite:CVPR18GenerateAdapt}, the state-of-the-art \emph{pixel-level} domain adaptation method, on the Digits dataset in Table~\ref{table:digit}. Although GTA is strong on closed-set domain adaptation tasks especially on the Digits dataset, SAN++ outperforms GTA on all the three partial domain adaptation tasks by a large margin. This confirms that GTA cannot mitigate the negative transfer difficulty.

\begin{figure}[htbp]
	\centering
	\includegraphics[width=.8\columnwidth]{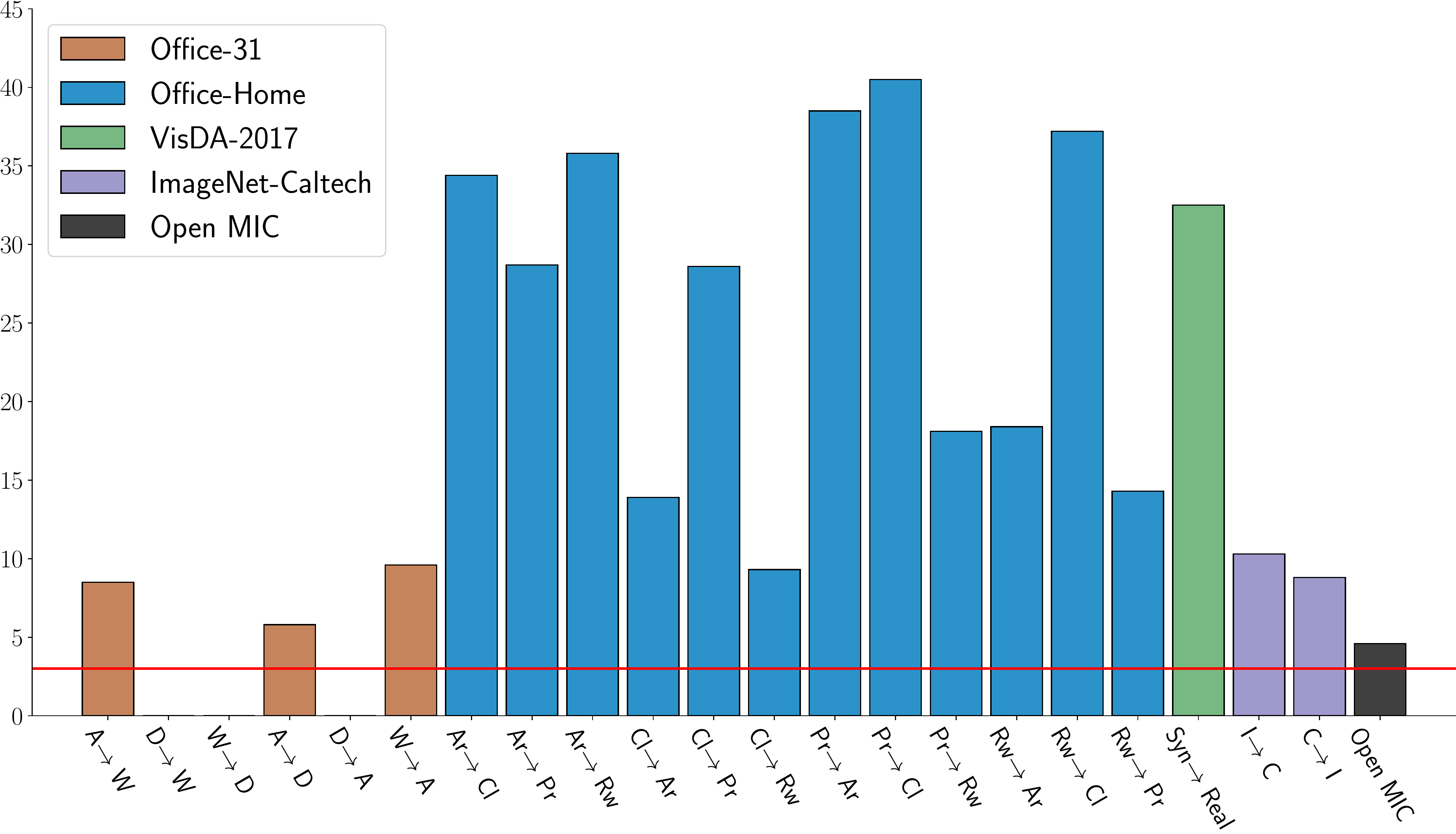}
	\caption{The negative natural logarithm of p-value that SAN++ outperforms SAN. The red line is $-\log(0.05)$ indicating the bar of statistical significance.}
	\label{fig:sig_san}
\end{figure}

The statistical significance that SAN++ outperforms SAN and ETN are shown in Fig.~\ref{fig:sig_san} and Fig.~\ref{fig:sig_etn} respectively. The red line indicates the p-value $0.05$, where the values above this line are statistically significant. We observe that SAN++ significantly outperforms SAN on most tasks. Comparing with the state-of-the-art ETN, SAN++ still achieves statistically significant improvement on half of the tasks.

\begin{figure}[htbp]
	\centering
	\includegraphics[width=.8\columnwidth]{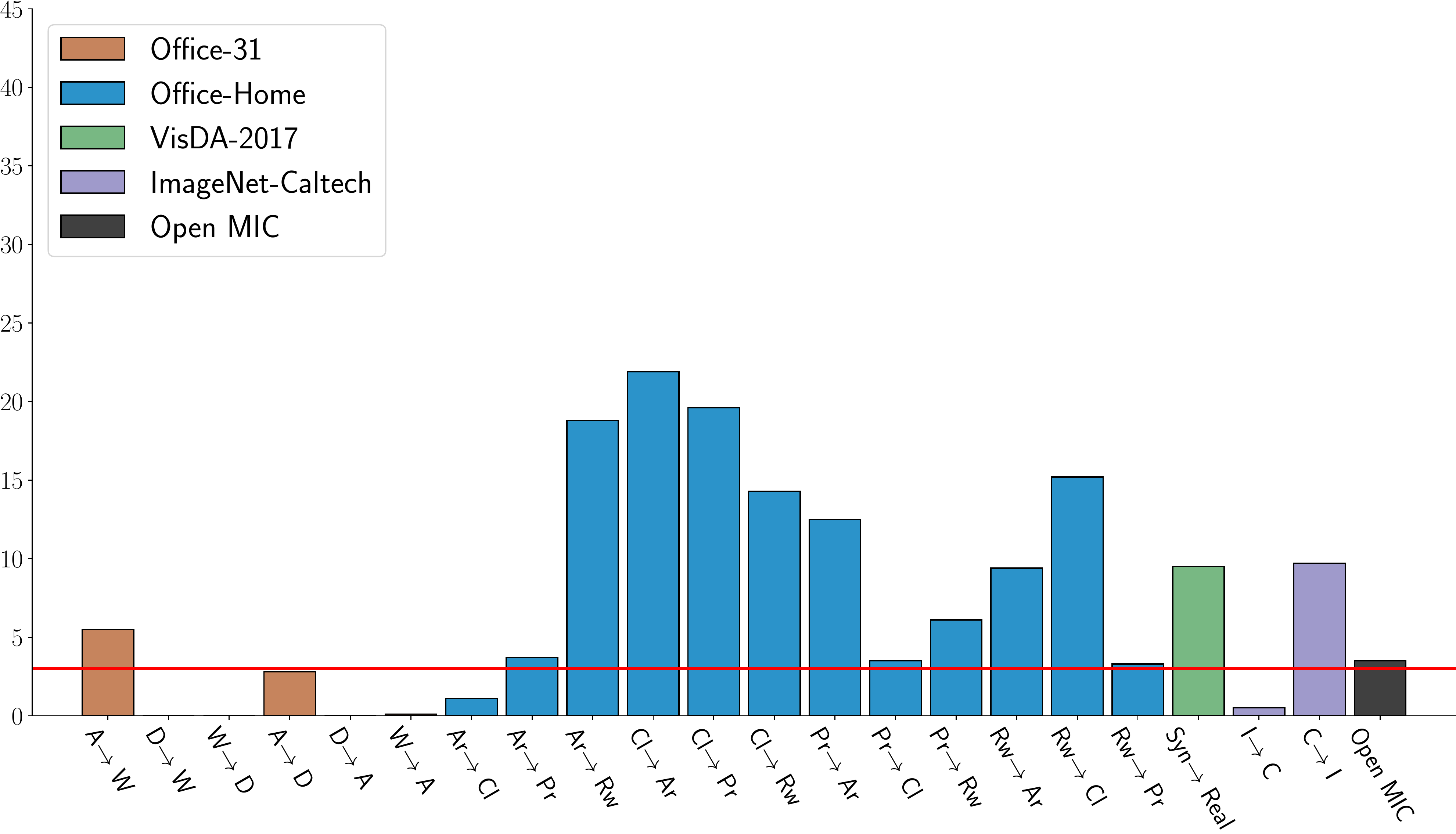}
	\caption{The negative natural logarithm of p-value that SAN++ outperforms ETN.  The red line is $-\log(0.05)$ indicating the bar of statistical significance.}
	\label{fig:sig_etn}
\end{figure}

\subsection{Analyses}

The previous section shows that SAN++ is a very effective algorithm for partial domain adaptation. Now we present additional experiments to delve into the effectiveness of the proposed SAN++ algorithm from several perspectives.

\begin{figure}[htbp]
	\centering
	\subfigure[Accuracy]{\includegraphics[width=0.23\textwidth]{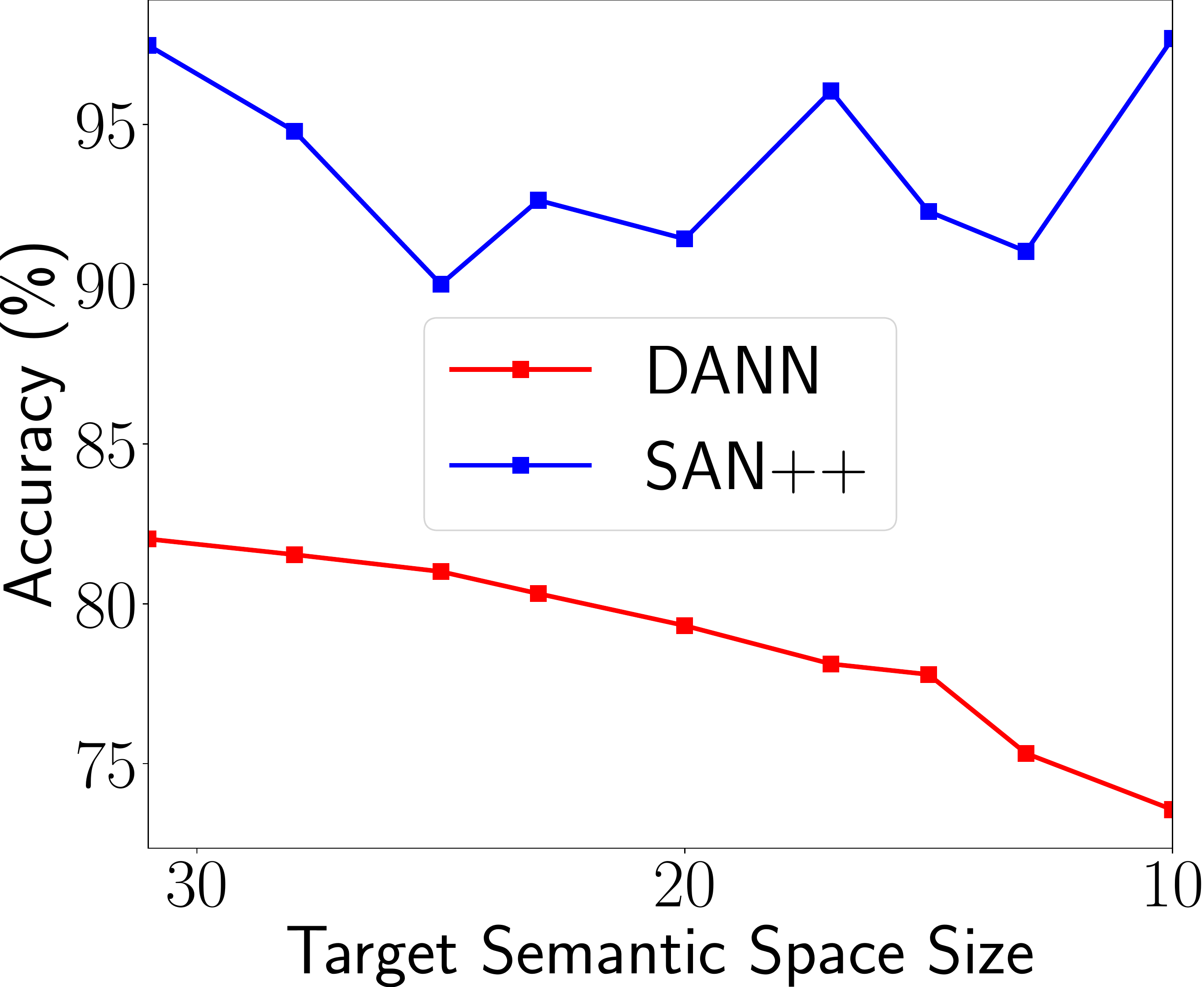}\label{fig:accuracy_number}}
	\hfil
	\subfigure[Transferable Probability]{\includegraphics[width=.23\textwidth]{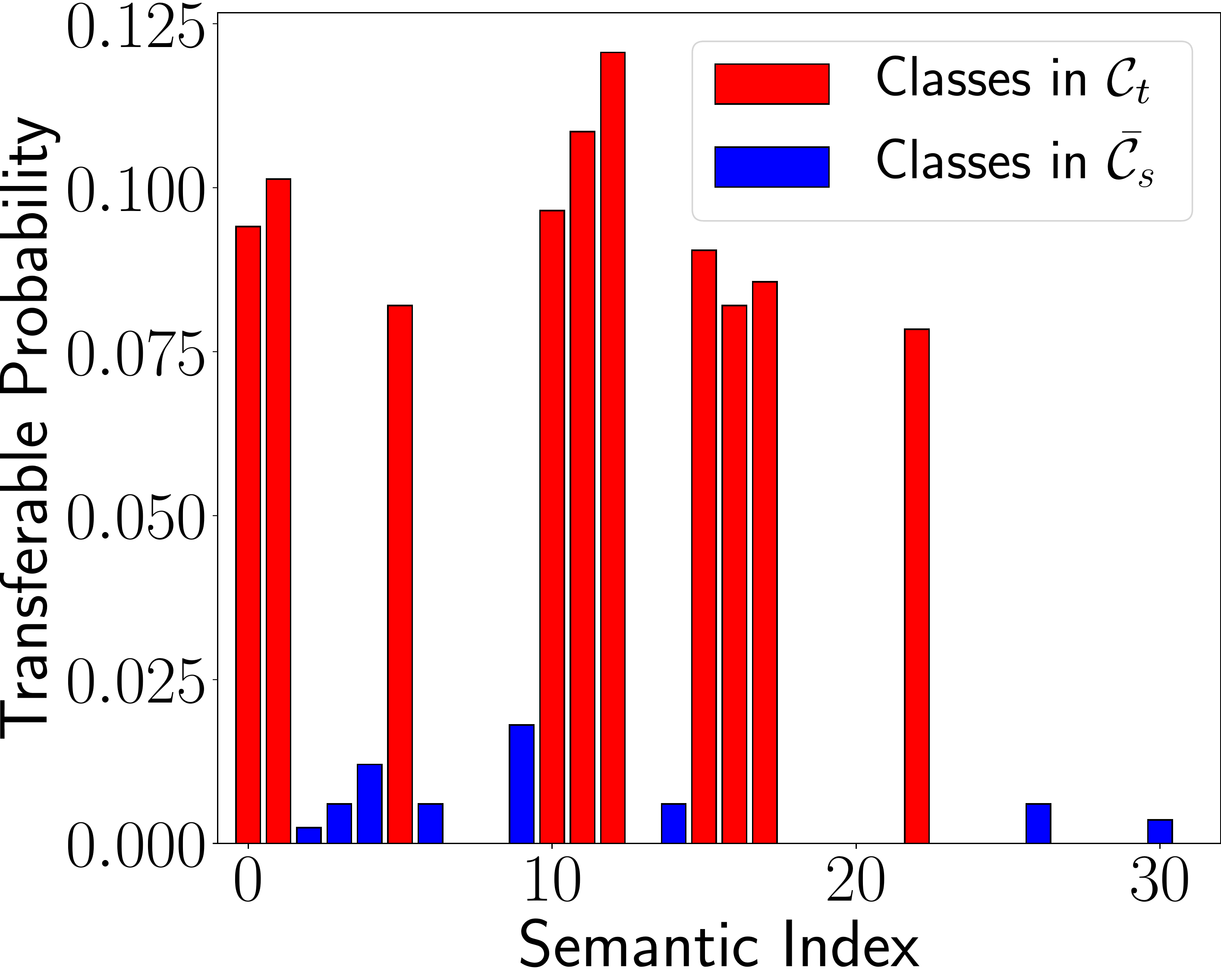}\label{fig:class_weight}}
	\vspace{-10pt}
	\caption{(a) Accuracy by varying the size of target class space on A$\rightarrow$W; (b) The estimated class transferable probability $\mathbf{w}$ for the task A$\rightarrow$W.}
\end{figure}

\subsubsection{Target Class Space}

We investigate a wider spectrum of partial domain adaptation problems by varying the size of target class space on task \textbf{A} $\rightarrow$ \textbf{W}. Fig.~\ref{fig:accuracy_number} shows that when the target class space size decreases, the performance of DANN degrades, meaning that negative transfer becomes severer when the gap between source and target class spaces are enlarged. The performance of SAN++ degenerates when the target class space size decreases from $31$ to $23$, where the negative transfer problem arises but the transfer problem itself is still hard. The performance of SAN++ increases when the target class space size decreases from $23$ to $10$, where the transfer problem itself becomes easier (the estimation of the class transferable probability becomes easier if the target class space is smaller). The margin that SAN++ outperforms DANN becomes larger when the target class space size decreases. SAN++ also outperforms DANN in the standard domain adaptation setting when the number of target class space contains all $31$ categories.

\begin{figure}[ht]
	\centering
	\includegraphics[width=\columnwidth]{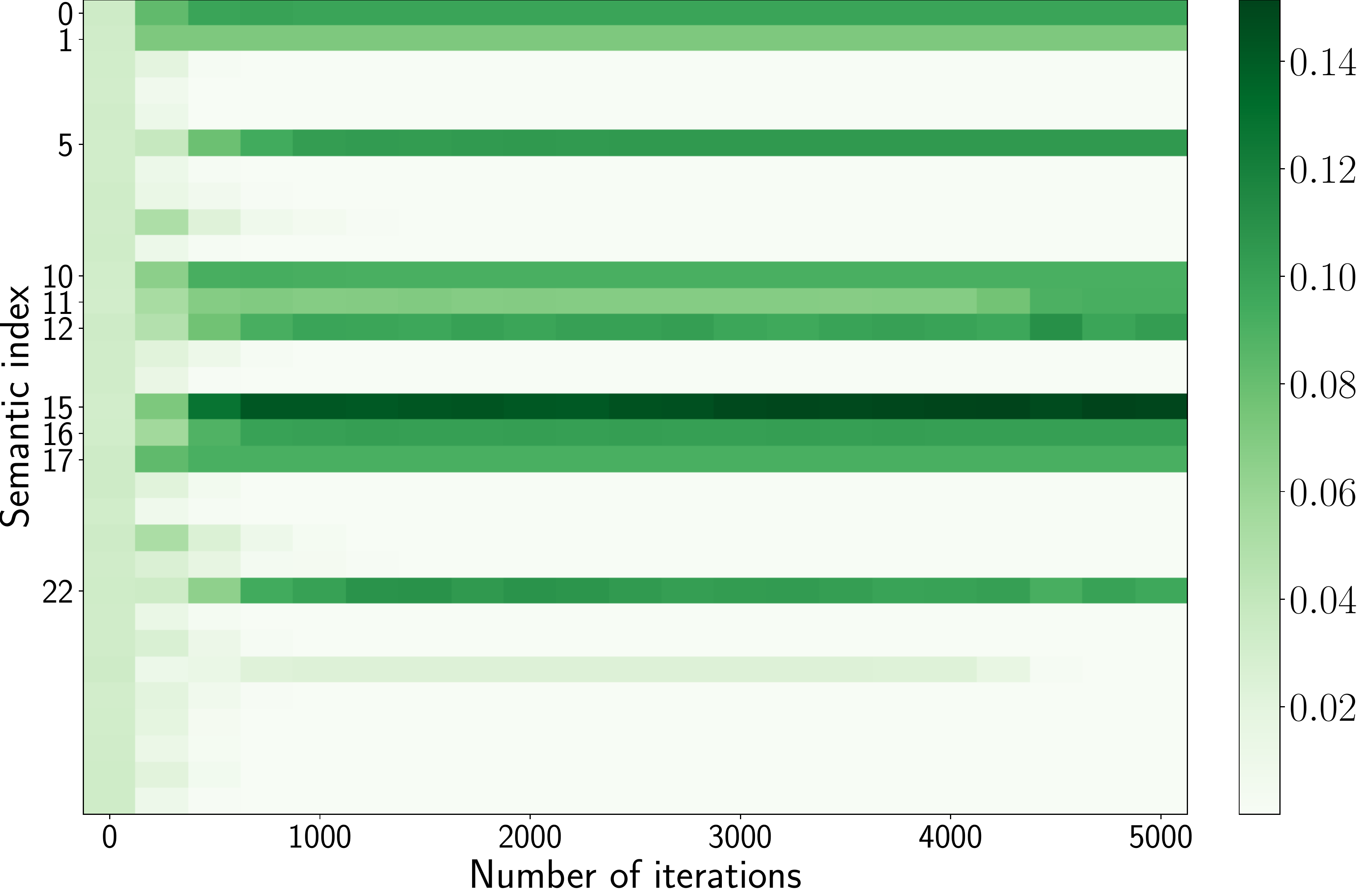}
	\caption{The class transferable probability $\mathbf{w}$ evolves and improves throughout training. The shared classes in $\mathcal{C}$ are upweighted while the outlier source classes in $\bar{\mathcal{C}}_s$ are filtered out progressively.}
	\vspace{-10pt}
	\label{fig:changing_weight}
\end{figure}

\subsubsection{Class Transferable Probability}

We show the class transferable probability $\mathbf{w}$ in Fig.~\ref{fig:class_weight} on the task of A$\rightarrow$W of Office-31. We observe that SAN++ can assign higher transferable probability for the classes in the shared class space $\mathcal{C}$ and filter out outlier classes in $\bar{\mathcal{C}}_s$ with lower transferable probability.

We calculate the class transferable probability $\mathbf{w}$ on task {A} $\rightarrow${W} of Office-31 during training and plot it in Fig~\ref{fig:changing_weight}. In the beginning, the transferable probability is distributed uniformly among all source classes since the classifier is randomly initialized. As the training process proceeds, the transferable probability of the classes in $\mathcal{C}$ increases and those in $\bar{\mathcal{C}}_s$ decreases and finally vanishes to $0$. In the end, all classes in $\mathcal{C}$ are assigned with large transferable probability while classes in $\bar{\mathcal{C}}_s$ are ignored. In particular, the transferable probability of the class ``{ruler}'' (row {25}) in $\bar{\mathcal{C}}_s$ increases moderately at iterations $0\sim4000$, but vanishes finally. This phenomenon may be caused by the similarity between this class and the class ``{bookcase}'' (row {11}) in $\mathcal{C}$ due to the similar rectangle shape. However, our method successfully identifies it and reduces its transferable probability. This shows that SAN++ is robust to small errors in the beginning and is able to make self-corrections through selective adversarial training as shown in Section~\ref{sec:conclusion}.

\begin{figure}[ht]
	\centering
	\includegraphics[width=1.0\columnwidth]{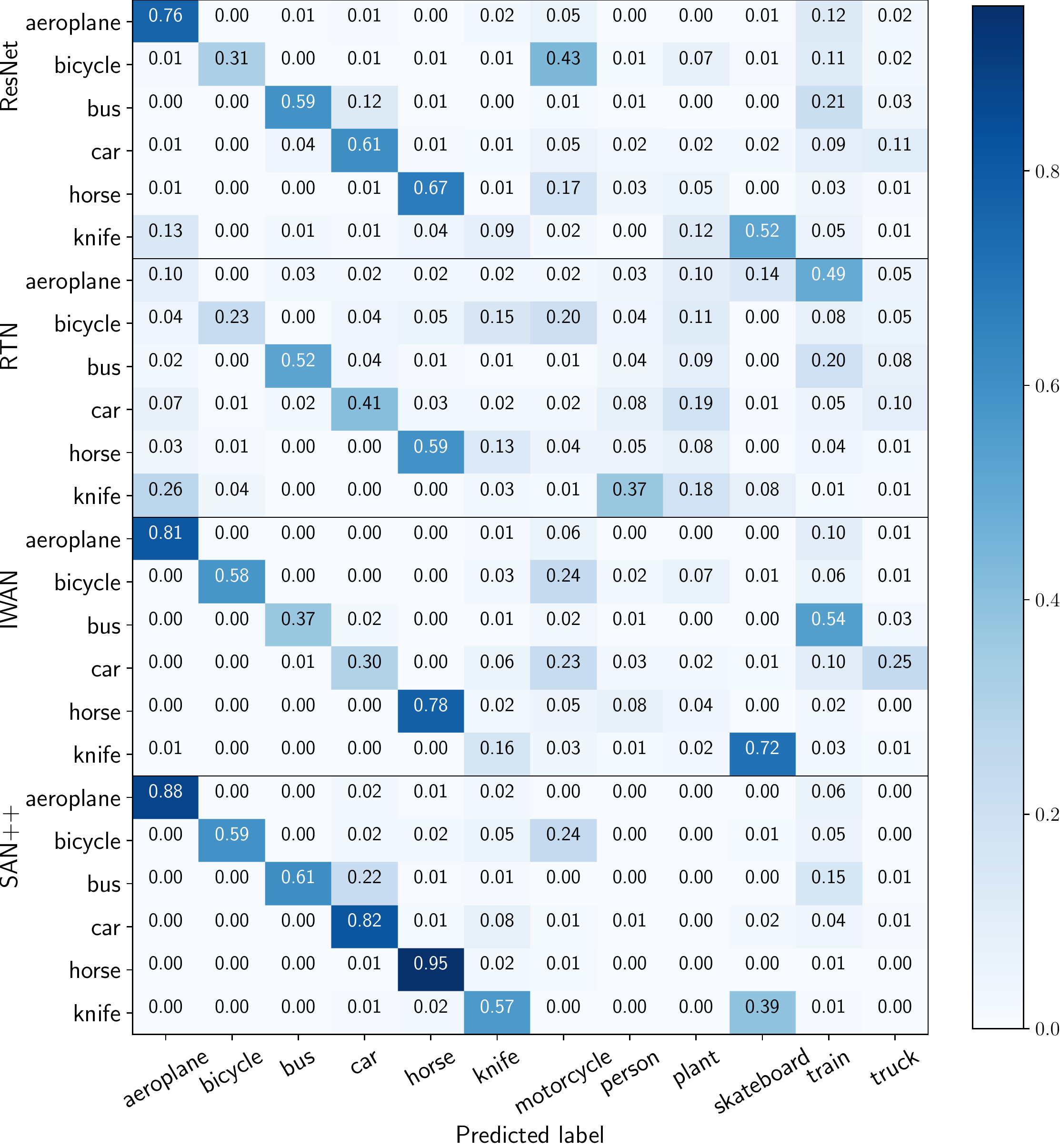}
	\caption{Confusion matrix of prediction results of ResNet, RTN, IWAN and SAN++ on VisDA-2017 dataset.}
	\vspace{-10pt}
	\label{fig:confusion_matrix}
\end{figure}

\begin{table*}[ht]
	\addtolength{\tabcolsep}{-3pt}
	\centering
	\captionsetup{justification=centering}
	\caption{{Comparison with BA$^3$US on \emph{Office-Home} (ResNet-50).}}
	\vspace{-5pt}
	\label{table:accuracy_officehome_1}
	\resizebox{\textwidth}{!}{%
		\begin{tabular}{lccccccccccccc}
			\toprule
			\multirow{2}{30pt}{Method} & \multicolumn{13}{c}{Office-Home} \\
			\cmidrule{2-14}
			                              & {Ar}$\rightarrow${Cl} & {Ar}$\rightarrow${Pr} & {Ar}$\rightarrow${Rw} & {Cl}$\rightarrow${Ar} & {Cl}$\rightarrow${Pr} & {Cl}$\rightarrow${Rw} & {Pr}$\rightarrow${Ar} & {Pr}$\rightarrow${Cl} & {Pr}$\rightarrow${Rw} & {Rw}$\rightarrow${Ar} & {Rw}$\rightarrow${Cl} & {Rw}$\rightarrow${Pr} & Avg            \\
			\midrule
			BA$^3$US~\cite{liang2020baus} & 60.62                 & \textbf{83.16}        & 88.39                 & 71.75                 & 72.79                 & 83.40                 & \textbf{75.45}        & 61.59                 & 86.53                 & 79.25                 & 62.80                 & 86.05                 & 75.98          \\
			SAN++                         & 61.25                 & 81.57                 & 88.57                 & 72.82                 & 76.41                 & 81.94                 & 74.47                 & 57.73                 & 87.24                 & 79.71                 & 63.76                 & 86.05                 & 75.96          \\
			SAN++ w/ COT                  & \textbf{61.49}        & 82.07                 & \textbf{88.74}        & \textbf{73.00}        & \textbf{78.04}        & \textbf{84.59}        & 74.84                 & \textbf{63.40}        & \textbf{87.96}        & \textbf{81.73}        & \textbf{64.72}        & \textbf{87.84}        & \textbf{77.37} \\
			\bottomrule
		\end{tabular}
	}
\end{table*}

\subsubsection{Confusion Matrix}

To investigate the influence of negative transfer, we calculate the confusion matrix on VisDA-2017 task for ResNet, RTN, IWAN and SAN++ (see Fig.~\ref{fig:confusion_matrix}). We observe that RTN suffers from negative transfer in all the classes since it aligns the whole source domain with the target domain, resulting in more predictions of classes into $\bar{\mathcal{C}}_s$. Confusion matrices of SAN++ and IWAN, however, are more``diagonal", which show better performance than ResNet in most classes. This indicates that SAN++ and IWAN introduce less negative transfer and assign less predictions into $\bar{\mathcal{C}}_s$. Fig.~\ref{fig:confusion_matrix} also shows that SAN++ is superior to IWAN. Note that VisDA2017 dataset is not balanced w.r.t. the classes. The class ``car'' has almost twice the number of images compared with other classes. SAN++ performs well on the ``car'' class and thus shows promising results in Table~\ref{table:accuracy_visic}. The accuracy of quite a few classes is still not satisfying. But this is mainly due to some similar categories like bicycle/motorcycle, bus/car/train, knife/skateboard (both are elongated). This is a widely-known open problem in unsupervised domain adaptation and is not the focus of this paper.

\begin{table}[htbp]
	\addtolength{\tabcolsep}{3pt}
	\centering
	\caption{{Comparison with BA$^3$US on \emph{VisDA-2017}, \emph{ImageNet-Caltech} and \emph{OpenMIC} (ResNet-50).}}
	\label{table:accuracy_visic_1}
	\vspace{-5pt}
	\resizebox{1\columnwidth}{!}{
		\begin{tabular}{lcccc}
			\toprule
			\multirow{2}{30pt}{Method} & \multirow{2}{30pt}{\centering VisDA2017} & \multicolumn{2}{c}{ImageNet-Caltech} & \multirow{2}{30pt}{Open MIC}\\
			\cmidrule(lr){3-4}
			                              &                & I $\rightarrow$ C & C $\rightarrow$ I &                \\
			\midrule
			BA$^3$US~\cite{liang2020baus} & 46.39          & 84.00             & 83.35             & 76.95          \\
			SAN++                         & 63.06          & 83.34             & 81.07             & 77.56          \\
			SAN++ w/ COT                  & \textbf{76.94} & \textbf{85.15}    & \textbf{84.76}    & \textbf{78.88} \\
			\bottomrule
		\end{tabular}}
\end{table}

\begin{table}[htbp]
	\addtolength{\tabcolsep}{-4.1pt}
	\centering
	\caption{{Comparison with BA$^3$US on \emph{Office-31} (ResNet-50).}}
	\label{table:accuracy_office_1}
	\vspace{-5pt}
	\resizebox{1\columnwidth}{!}{
		\begin{tabular}{lccccccc}
			\toprule
			\multirow{2}{30pt}{Method} & \multicolumn{7}{c}{Office-31} \\
			\cmidrule{2-8}
			                              & A $\rightarrow$ W & D $\rightarrow$ W & W $\rightarrow$ D & A $\rightarrow$ D & D $\rightarrow$ A & W $\rightarrow$ A & Avg            \\
			\midrule
			BA$^3$US~\cite{liang2020baus} & 98.98             & \textbf{100.00}   & 98.73             & \textbf{99.36}    & 94.82             & 94.99             & 97.81          \\
			SAN++                         & \textbf{99.66}    & \textbf{100.00}   & \textbf{100.00}   & 98.09             & 94.05             & 95.51             & 97.89          \\
			SAN++ w/ COT                  & 99.32             & \textbf{100.00}   & \textbf{100.00}   & 98.72             & \textbf{94.95}    & \textbf{96.14}    & \textbf{98.18} \\
			\bottomrule
		\end{tabular}}
\end{table}

\subsubsection{Complement Entropy}

{We compare with the state-of-the-art distribution alignment based partial domain adaptation method BA$^3$US~\cite{liang2020baus}. BA$^3$US adopts the complement entropy (COT) loss~\cite{chen2019complement} to regularize the prediction, which is a general technique to expect uniform and low prediction scores for incorrect classes for labeled source samples. COT can be naturally embedded into our method and we also show SAN++ with COT. As shown in Tables~\ref{table:accuracy_officehome_1}, ~\ref{table:accuracy_office_1} and~\ref{table:accuracy_visic_1}, SAN++ performs comparably to BA$^3$US on most of the tasks in Office-31, Office-Home, ImageNet-Caltech and Open MIC datasets, but on the VisDA2017 tasks SAN++ significantly outperforms BA$^3$US. We further show that when combining SAN++ with COT, the performance can be improved to be higher than both SAN++ and BA$^3$US. The results demonstrate that SAN++ is complementary to COT~\cite{chen2019complement}, and the two methods can be combined to produce much higher performance in practical applications.}

\begin{table}[htbp]
	\addtolength{\tabcolsep}{2pt}
	\centering
	\caption{Memory and Time usage of SAN++ and SAN++ w/o bottom layer sharing (one iteration per image on I$\rightarrow$C).}
	\label{table:memory_time}
	\vspace{-5pt}
	\resizebox{\columnwidth}{!}{
		\begin{tabular}{lccc}
			\toprule
			Method           & Peak GPU Memory/MB & Time/s \\
			\midrule
			SAN++            & 569.5              & 0.148  \\
			SAN++ w/o shared & 5551.0             & 1.601  \\
			\bottomrule
		\end{tabular}}
\end{table}

\begin{table*}[htbp]
	\addtolength{\tabcolsep}{-1pt}
	\centering
	\caption{Accuracy (\%) of SAN++ and its variants on \emph{Office-31} with ResNet-50 backbone.}
	\label{table:accuracy_ablation}
	\vspace{-5pt}
	\resizebox{1\textwidth}{!}{
		\begin{tabular}{ccccc|ccccccc}
			\toprule
			\multirow{2}{30pt}{\centering instance} & \multirow{2}{30pt}{\centering class} & \multirow{2}{50pt}{\centering self-training} & \multirow{2}{30pt}{\centering entropy} & \multirow{2}{30pt}{\centering shared}  & \multicolumn{7}{|c}{Office-31} \\
			\cmidrule{6-12}
			       &        &        &        &        & A $\rightarrow$ W & D $\rightarrow$ W & W $\rightarrow$ D & A $\rightarrow$ D & D $\rightarrow$ A & W $\rightarrow$ A & Avg            \\
			\midrule
			\xmark & \xmark & \xmark & \xmark & \xmark & 75.59             & 96.27             & 98.09             & 83.44             & 83.92             & 84.97             & 87.05          \\
			\ymark & \xmark & \xmark & \xmark & \ymark & 85.47             & 98.22             & 99.07             & 87.24             & 87.89             & 88.43             & 91.05          \\
			\ymark & \ymark & \xmark & \xmark & \ymark & 90.11             & 99.19             & 99.59             & 89.01             & 90.69             & 90.87             & 93.24          \\
			\ymark & \ymark & \xmark & \ymark & \ymark & 95.01             & 99.69             & 99.69             & 91.77             & 92.66             & 93.10             & 95.32          \\
			\ymark & \ymark & \ymark & \xmark & \xmark & 97.89             & 99.88             & 99.69             & 95.03             & 93.12             & \textbf{96.48}    & 97.02          \\
			\midrule
			\ymark & \ymark & \ymark & \xmark & \ymark & \textbf{99.66}    & \textbf{100.00}   & \textbf{100.00}   & \textbf{98.09}    & \textbf{94.05}    & {95.51}           & \textbf{97.89} \\
			\bottomrule
		\end{tabular}}
\end{table*}

\subsubsection{Model Ablations}

In terms of whether to use shared bottom layers for the multitask discriminator, we can observe that using shared bottom layers performs comparably to using entirely separate discriminators. As shown in Table~\ref{table:memory_time}, using shared layers can decrease the memory and computational cost significantly.

To go deeper with the efficacy of all SAN++ components: the instance selection (\emph{instance}), the class selection (\emph{class}), the self-training regularization (\emph{self-training}), the entropy minimization used in our conference version but replaced by self-training in this journal version (\emph{entropy}), and whether to share bottom layers for the multi-task discriminator (\emph{sharing}). We perform the ablation study by using or removing these components.

Based on ablation study in Table~\ref{table:accuracy_ablation}, we can make the following observations. (1) Self-training is a more effective regularization technique for partial domain adaptation, and can be used to replace the widely-used entropy minimization. (2) Self-training enhanced by the proposed class transferable probability can substantially improve the performance of partial domain adaptation. (3) Class selection can improve the performance by confining the training within the shared class space to mitigate negative transfer. (4) Instance selection enables a more powerful distribution alignment in a class-correspondence way to promote positive transfer.

\section{Towards More Challenging PDA}

{This paper mainly studies partial domain adaptation where the source class space subsumes the target class space. This PDA setting is straightforward and easy to understand. However, due to practical issues like synonyms or fine-grain species in class labels, it is difficult to find a source domain whose class space \textit{exactly} contains the target class space.}

{For example, the source domain may have a ``Dog'' class, which consists of samples from fine-grained species like ``Collie'', ``Golden Retriever'', etc. All these samples are labeled as ``Dog'' since fine-grained labels are unavailable. Typically, the target domain is smaller than the source domain and requires fine-grained labels like ``Golden Retriever''. The big source domain contains samples of ``Golden Retriever'' among samples of ``Dog'', which should help discriminating ``Golden Retriever'' in the target domain. }

{This scenario seems to violate the PDA setting since ``Golden Retriever'' $\in \mathcal{C}_t$ and ``Golden Retriever'' $\notin \mathcal{C}_s \implies \mathcal{C}_t \nsubseteq \mathcal{C}_s$. However, ``Dog'' is a superclass of ``Golden Retriever'' and ``Dog'' $\in \mathcal{C}_s \cap \mathcal{C}_t$. From the perspective of \emph{superclass}, this scenario can be regarded as PDA because source classes subsume superclasses of target classes, \emph{i.e.}, the class space inclusion assumption $\mathcal{C}_t \subseteq \mathcal{C}_s$ is satisfied at the level of superclass. }

{In manually curated datasets, the assumption that the source label space $\mathcal{C}_s$ exactly subsumes the target label space $\mathcal{C}_t$ can easily hold. In practice, however, people may find that the label space inclusion assumption cannot hold exactly, but it may hold at the level of superclass like the above example. This scenario is more challenging, since PDA learning cannot directly yield a classifier to organize target data into fine-grained target labels (such as ``Golden Retriever''), but it can only produce a transferable feature extractor $F$, which can be useful for learning a target classifier from a few examples indicative of the target classes.}

\begin{table}[htbp]
	\centering
	\renewcommand\tabcolsep{10pt}
	\caption{{Superclasses in the VisDA-Sub setting.}}
	\label{tab:visda_sub_detail}
	\vspace{-5pt}
	\resizebox{0.8\columnwidth}{!}{
		\begin{tabular}{c|c}
			\toprule
			superclass             & subclass            \\
			\\[-5pt]
			\hline
			\\[-5pt]
			cycling transportation & bicycle, motorcycle \\
			\\[-5pt]
			\hline
			\\[-5pt]
			wheel transportation   & bus, car, truck     \\
			\\[-5pt]
			\hline
			\\[-5pt]
			others                 & knife, skateboard   \\
			\\[-5pt]
			\hline
			\\[-5pt]
			other transportation   & aeroplane, train    \\
			\\[-5pt]
			\hline
			\\[-5pt]
			organism               & horse, plant        \\
			\\[-5pt]
			\hline
			\\[-5pt]
			person                 & person              \\
			\bottomrule
		\end{tabular}}
\end{table}

{To explore the challenging PDA setting, we construct new classes as superclasses (Table~\ref{tab:visda_sub_detail}) and use the original classes in VisDA-2017 as subclasses. The setting is called ``VisDA-Sub'', with 12 tasks having different subclasses in the target domain, where 8 tasks with prefix ``Sub'' have subclasses from different superclasses in the target domain while 4 tasks with prefix ``Sub-Same'' have subclasses from the same superclass in the target domain. The details of the target domain subclasses are shown in Table~\ref{tab:visda_sub}.}

{For the VisDA-Sub setting, since the target class space is unknown during training, for SAN++ and all the compared methods, we can at most learn a transferable feature extractor. To enable fine-grained recognition (subclasses) in the target domain, we randomly select \emph{only one} labeled example from each class in the target class space as the prototype. We predict the label of each target example as the label of the nearest prototype. We repeat the experiment $10$ times with different target prototypes and report the average accuracy.}

{Results on the more challenging PDA tasks are shown in Table~\ref{table:accuracy_visda_sub}. Note that whether the target classes belong to different superclasses or belong to the same superclasses, SAN++ outperforms all the baseline methods. }

\begin{table*}[tbp]
	\centering
	\renewcommand\tabcolsep{7pt}
	\caption{{The details of the target classes and the corresponding superclasses for each task in VisDA-Sub.}}
	\vspace{-5pt}
	\label{tab:visda_sub}
	\resizebox{1\textwidth}{!}{
		\begin{tabular}{c|cc|ccc|cc|cc|cc|c}
			\toprule
			superclass & \multicolumn{2}{|c|}{cycling transportation} & \multicolumn{3}{c|}{wheel transportation} & \multicolumn{2}{c|}{others} & \multicolumn{2}{c|}{other transportation} & \multicolumn{2}{c|}{organism} & Person \\
			\cmidrule{1-13}
			subclass            & bicycle & motorcycle & bus    & car    & truck  & knife  & skateboard & aeroplane & train  & horse & plant  & person \\
			\midrule
			\textbf{Sub-2-1}    & \ymark  &            & \ymark &        &        &        &            &           &        &       &        &        \\
			\midrule
			\textbf{Sub-2-2}    &         & \ymark     &        &        & \ymark &        &            &           &        &       &        &        \\
			\midrule
			\textbf{Sub-3-1} & \ymark &  & \ymark &  &  & \ymark &  &  &  &  & \\
			\midrule
			\textbf{Sub-3-2}    &         & \ymark     &        &        & \ymark &        & \ymark     &           &        &       &        &        \\
			\midrule
			\textbf{Sub-4-1} & \ymark &  & \ymark &  &  & \ymark &  & \ymark &  &  & \\
			\midrule
			\textbf{Sub-4-2}    &         & \ymark     &        &        & \ymark &        & \ymark     &           & \ymark &       &        &        \\
			\midrule
			\textbf{Sub-5-1} & \ymark &  & \ymark &  &  & \ymark &  & \ymark &  & \ymark & \\
			\midrule
			\textbf{Sub-5-2}    &         & \ymark     &        &        & \ymark &        & \ymark     &           & \ymark &       & \ymark &        \\
			\midrule
			\textbf{Sub-Same-1} & \ymark  & \ymark     &        &        &        &        &            &           &        &       &        &        \\
			\midrule
			\textbf{Sub-Same-2} & \ymark  & \ymark     & \ymark & \ymark & \ymark &        &            &           &        &       &        &        \\
			\midrule
			\textbf{Sub-Same-3} & \ymark  & \ymark     & \ymark & \ymark & \ymark & \ymark & \ymark     &           &        &       &        &        \\
			\midrule
			\textbf{Sub-Same-4} & \ymark  & \ymark     & \ymark & \ymark & \ymark & \ymark & \ymark     & \ymark    & \ymark &       &        &        \\
			\bottomrule
		\end{tabular}}
\end{table*}

\begin{table*}[hptb]
	\addtolength{\tabcolsep}{-3pt}
	\centering
	\caption{{Accuracy (\%) of partial domain adaptation tasks on \emph{VisDA-Sub} (ResNet-50).}}
	\vspace{-5pt}
	\label{table:accuracy_visda_sub}
	\resizebox{1\textwidth}{!}{
		\begin{tabular}{lccccccccccccc}
			\toprule
			\multirow{2}{30pt}{Method} & \multicolumn{13}{c}{VisDA-Sub} \\
			\cmidrule{2-14}
			                              & Sub-2-1        & Sub-2-2        & Sub-3-1        & Sub-3-2        & Sub-4-1        & Sub-4-2        & Sub-5-1        & Sub-5-2        & Sub-Same-1     & Sub-Same-2     & Sub-Same-3     & Sub-Same-4     & Avg            \\
			\midrule
			ResNet~\cite{cite:CVPR16DRL}  & 71.74          & 73.21          & 70.61          & 66.84          & 75.34          & 74.25          & 76.20          & 74.55          & 55.74          & 37.63          & 38.67          & 38.29          & 62.76          \\
			DANN~\cite{cite:ICML15DANN}   & 71.79          & 68.88          & 59.92          & 51.32          & 68.71          & 66.52          & 71.51          & 73.90          & 57.02          & 31.67          & 26.28          & 32.88          & 56.45          \\
			ADDA~\cite{cite:CVPR17ADDA}   & 72.45          & 67.32          & 61.56          & 53.21          & 61.87          & 64.13          & 71.21          & 70.80          & 55.33          & 34.15          & 28.19          & 29.99          & 55.60          \\
			%LEL~\cite{cite:NIPS17LEL} & 79.66 & 89.49 & 94.90 & 80.25 & 87.89 & 78.08 & 85.05 \\
			IWAN~\cite{cite:CVPR18IWAN}   & 77.49          & 69.54          & 62.12          & 63.12          & 71.23          & 65.66          & 70.44          & 73.11          & 57.55          & 38.31          & 31.45          & 35.02          & 59.59          \\
			PADA~\cite{cao_partial_2018}  & 78.56          & 70.39          & 63.12          & 64.35          & 70.84          & 66.19          & 72.32          & 75.49          & 58.38          & 39.12          & 32.01          & 35.11          & 60.49          \\
			DRCN~\cite{li2020deep}        & 79.15          & 71.43          & 63.31          & 63.41          & 71.35          & 70.41          & 73.12          & 74.88          & 58.76          & 40.46          & 34.32          & 36.12          & 61.39          \\
			ETN~\cite{cao_learning_2019}  & 80.34          & 72.13          & 66.67          & 66.18          & 75.97          & 76.49          & 75.12          & 76.39          & 59.46          & 40.94          & 40.58          & 37.21          & 63.96          \\
			BA$^3$US~\cite{liang2020baus} & 85.84          & 73.53          & 69.61          & 56.50          & 77.24          & 67.71          & \textbf{78.52} & 77.39          & 60.01          & 42.09          & 41.96          & 38.49          & 64.07          \\
			\midrule
			SAN~\cite{cite:CVPR18SAN}     & 77.27          & 71.56          & 62.46          & 64.11          & 73.67          & 78.14          & 73.84          & 81.56          & 57.88          & 38.59          & 31.02          & 34.96          & 62.09          \\
			\midrule
			SAN++                         & \textbf{88.51} & \textbf{80.60} & \textbf{79.72} & \textbf{76.71} & \textbf{77.28} & \textbf{86.15} & 78.27          & \textbf{89.07} & \textbf{62.87} & \textbf{45.88} & \textbf{43.44} & \textbf{41.62} & \textbf{71.84} \\
			\bottomrule
		\end{tabular}}
\end{table*}

{When different target classes belong to different superclasses (``Sub-2-1''/``Sub-2-2''/``Sub-3-1''/``Sub-3-2''/``Sub-4-1''/``Sub-4-2''/``Sub-5-1''/``Sub-5-2''), it is similar to the common PDA setting since no subclasse belong to the same superclass and the class boundary of the superclasses can also be used for separating the subclasses. However, the semantic shift between the superclass and the subclass still makes the task more difficult than common PDA tasks. In these tasks, all the PDA methods outperform ResNet and SAN++ achieves the best performance.}

{When there are target classes belonging to the same superclasses (``Sub-Same-1''/``Sub-Same-2''/``Sub-Same-3''/``Sub-Same-4''), it is much more difficult than common PDA tasks due to the need to separate subclasses in the same superclass. Even some advanced PDA methods such as PADA and DRCN perform worse than ResNet (source only). This is because PDA methods can only discriminate superclasses, ignoring more detailed separation of target subclasses. SAN++ could learn a feature space where samples of different subclasses are more clearly separated. }

\section{Conclusion}

In this paper, we introduce the partial domain adaptation (PDA) problem, where the source class space subsumes the target class space. We theoretically analyze the problem and show the importance of estimating the transferable probability of each class and each instance across domains. Based on the theoretical guidelines, we propose the Selective Adversarial Networks (SAN and SAN++) to promote positive transfer in the shared class space by instance selection, and to alleviate negative transfer of outlier source class space by class selection. Both theoretical and empirical results demonstrate that our approach successfully tackles the partial domain adaptation problem, even when the PDA task becomes more challenging with superclasses.

\section*{Acknowledgments}
This work was supported by National Key R\&D Program of China (2020AAA0109201), NSFC grants (62022050, 62021002), Beijing Nova Program (Z201100006820041), BNRist Fund (BNR2021RC01002), and Tsinghua-Huawei Innovation Fund.

{
\small
\bibliographystyle{plain}
\bibliography{IEEEabrv,SAN}

\begin{thebibliography}{10}

\bibitem{arcos-garcia_deep_2018}
Alvaro Arcos-Garcia, Juan~A. Alvarez-Garcia, and Luis~M. Soria-Morillo.
\newblock Deep neural network for traffic sign recognition systems: {An}
  analysis of spatial transformers and stochastic optimisation methods.
\newblock {\em Neural Networks}, 2018.

\bibitem{baktashmotlagh_learning_2019}
Mahsa Baktashmotlagh, Masoud Faraki, Tom Drummond, and Mathieu Salzmann.
\newblock Learning factorized representations for open-set domain adaptation.
\newblock In {\em {ICLR}}, 2019.

\bibitem{cite:tltheory2010}
Shai Ben-David, John Blitzer, Koby Crammer, Alex Kulesza, Fernando Pereira, and
  Jennifer~Wortman Vaughan.
\newblock A theory of learning from different domains.
\newblock {\em Machine Learning}, 2010.

\bibitem{bishop_pattern_2006}
Christopher~M. Bishop.
\newblock {\em Pattern recognition and machine learning}.
\newblock 2006.

\bibitem{cite:CVPR17PLDA}
Konstantinos Bousmalis, Nathan Silberman, David Dohan, Dumitru Erhan, and Dilip
  Krishnan.
\newblock Unsupervised pixel-level domain adaptation with generative
  adversarial networks.
\newblock In {\em {CVPR}}, 2017.

\bibitem{Brown20a}
Tom Brown, Benjamin Mann, Nick Ryder, Melanie Subbiah, Jared~D Kaplan, Prafulla
  Dhariwal, Arvind Neelakantan, Pranav Shyam, Girish Sastry, Amanda Askell,
  Sandhini Agarwal, Ariel Herbert-Voss, Gretchen Krueger, Tom Henighan, Rewon
  Child, Aditya Ramesh, Daniel Ziegler, Jeffrey Wu, Clemens Winter, Chris
  Hesse, Mark Chen, Eric Sigler, Mateusz Litwin, Scott Gray, Benjamin Chess,
  Jack Clark, Christopher Berner, Sam McCandlish, Alec Radford, Ilya Sutskever,
  and Dario Amodei.
\newblock Language models are few-shot learners.
\newblock In {\em NeurIPS}, pages 1877--1901, 2020.

\bibitem{cite:CVPR18SAN}
Zhangjie Cao, Mingsheng Long, Jianmin Wang, and Michael~I. Jordan.
\newblock Partial transfer learning with selective adversarial networks.
\newblock In {\em {CVPR}}, 2018.

\bibitem{cao_partial_2018}
Zhangjie Cao, Lijia Ma, Mingsheng Long, and Jianmin Wang.
\newblock Partial {Adversarial} {Domain} {Adaptation}.
\newblock In {\em {ECCV}}, 2018.

\bibitem{cao_learning_2019}
Zhangjie Cao, Kaichao You, Mingsheng Long, Jianmin Wang, and Qiang Yang.
\newblock Learning to {Transfer} {Examples} for {Partial} {Domain}
  {Adaptation}.
\newblock In {\em {CVPR}}, 2019.

\bibitem{chen2019complement}
Hao-Yun Chen, Pei-Hsin Wang, Chun-Hao Liu, Shih-Chieh Chang, Jia-Yu Pan,
  Yu-Ting Chen, Wei Wei, and Da-Cheng Juan.
\newblock Complement objective training.
\newblock {\em arXiv preprint arXiv:1903.01182}, 2019.

\bibitem{chen_domain_2020}
Jin Chen, Xinxiao Wu, Lixin Duan, and Shenghua. Gao.
\newblock Domain {Adversarial} {Reinforcement} {Learning} for {Partial}
  {Domain} {Adaptation}.
\newblock {\em TNNLS}, 2020.

\bibitem{chen2020simple}
Ting Chen, Simon Kornblith, Mohammad Norouzi, and Geoffrey Hinton.
\newblock A simple framework for contrastive learning of visual
  representations.
\newblock In {\em ICML}, pages 1597--1607. PMLR, 2020.

\bibitem{chen_selective_2020}
Zhihong Chen, Chao Chen, Zhaowei Cheng, Boyuan Jiang, Ke~Fang, and Xinyu Jin.
\newblock Selective {Transfer} {With} {Reinforced} {Transfer} {Network} for
  {Partial} {Domain} {Adaptation}.
\newblock In {\em {CVPR}}, 2020.

\bibitem{cite:ICML14DeCAF}
Jeff Donahue, Yangqing Jia, Oriol Vinyals, Judy Hoffman, Ning Zhang, Eric
  Tzeng, and Trevor Darrell.
\newblock Decaf: {A} deep convolutional activation feature for generic visual
  recognition.
\newblock In {\em {ICML}}, 2014.

\bibitem{doob_regularity_1940}
Joseph~L. Doob.
\newblock Regularity properties of certain families of chance variables.
\newblock {\em Transactions of the American Mathematical Society}, 1940.

\bibitem{dosovitskiy2015discriminative}
Alexey Dosovitskiy, Philipp Fischer, Jost~Tobias Springenberg, Martin
  Riedmiller, and Thomas Brox.
\newblock Discriminative unsupervised feature learning with exemplar
  convolutional neural networks.
\newblock {\em TPAMI}, 38(9):1734--1747, 2015.

\bibitem{fang_open_2020}
Zhen Fang, Jie Lu, Feng Liu, Junyu Xuan, and Guangquan Zhang.
\newblock Open {Set} {Domain} {Adaptation}: {Theoretical} {Bound} and
  {Algorithm}.
\newblock {\em TNNLS}, 2020.

\bibitem{cite:ICML15DANN}
Yaroslav Ganin and Victor Lempitsky.
\newblock Unsupervised {Domain} {Adaptation} by {Backpropagation}.
\newblock In {\em {ICML}}, 2015.

\bibitem{cite:JMLR16DANN}
Yaroslav Ganin, Evgeniya Ustinova, Hana Ajakan, Pascal Germain, Hugo
  Larochelle, François Laviolette, Mario Marchand, and Victor Lempitsky.
\newblock Domain-adversarial training of neural networks.
\newblock {\em JMLR}, 2016.

\bibitem{goodfellow_deep_2016}
Ian Goodfellow, Yoshua Bengio, and Aaron Courville.
\newblock {\em Deep learning}.
\newblock 2016.

\bibitem{cite:NIPS14GAN}
Ian Goodfellow, Jean Pouget-Abadie, Mehdi Mirza, Bing Xu, David Warde-Farley,
  Sherjil Ozair, Aaron Courville, and Yoshua Bengio.
\newblock Generative adversarial nets.
\newblock In {\em {NeurIPS}}, 2014.

\bibitem{cite:NIPS04SSLEM}
Yves Grandvalet and Yoshua Bengio.
\newblock Semi-supervised learning by entropy minimization.
\newblock In {\em {NeurIPS}}, 2005.

\bibitem{cite:NIPS12MKMMD}
Arthur Gretton, Dino Sejdinovic, Heiko Strathmann, Sivaraman Balakrishnan,
  Massimiliano Pontil, Kenji Fukumizu, and Bharath~K. Sriperumbudur.
\newblock Optimal kernel choice for large-scale two-sample tests.
\newblock In {\em {NeurIPS}}, 2012.

\bibitem{griffin_caltech-256_2007}
Gregory Griffin, Alex Holub, and Pietro Perona.
\newblock Caltech-256 object category dataset.
\newblock 2007.

\bibitem{cite:ICCV17ADA}
Philip Haeusser, Thomas Frerix, Alexander Mordvintsev, and Daniel Cremers.
\newblock Associative domain adaptation.
\newblock In {\em {CVPR}}, 2017.

\bibitem{he2019rethinking}
Kaiming He, Ross Girshick, and Piotr Doll{\'a}r.
\newblock Rethinking imagenet pre-training.
\newblock In {\em CVPR}, pages 4918--4927, 2019.

\bibitem{cite:CVPR16DRL}
Kaiming He, Xiangyu Zhang, Shaoqing Ren, and Jian Sun.
\newblock Deep residual learning for image recognition.
\newblock In {\em {CVPR}}, 2016.

\bibitem{herath_min-max_2019}
Samitha Herath, Mehrtash Harandi, Basura Fernando, and Richard Nock.
\newblock Min-{Max} {Statistical} {Alignment} for {Transfer} {Learning}.
\newblock In {\em {CVPR}}, 2019.

\bibitem{herath_learning_2017}
Samitha Herath, Mehrtash Harandi, and Fatih Porikli.
\newblock Learning an {Invariant} {Hilbert} {Space} for {Domain} {Adaptation}.
\newblock In {\em {CVPR}}, 2017.

\bibitem{cite:ICML18CYCADA}
Judy Hoffman, Eric Tzeng, Taesung Park, Jun-Yan Zhu, Phillip Isola, Kate
  Saenko, Alexei Efros, and Trevor Darrell.
\newblock {CyCADA}: {Cycle}-{Consistent} {Adversarial} {Domain} {Adaptation}.
\newblock In {\em {ICML}}, 2018.

\bibitem{hu_discriminative_2020}
Jian Hu, Hongya Tuo, Chao Wang, Lingfeng Qiao, Haowen Zhong, Junchi Yan,
  Zhongliang Jing, and Henry Leung.
\newblock Discriminative partial domain adversarial network.
\newblock In {\em {ECCV}}, 2020.

\bibitem{cite:CVPR18DupGAN}
Lanqing Hu, Meina Kan, Shiguang Shan, and Xilin Chen.
\newblock Duplex {Generative} {Adversarial} {Network} for {Unsupervised}
  {Domain} {Adaptation}.
\newblock In {\em {CVPR}}, 2018.

\bibitem{cite:TPAMI94USPS}
Jonathan~J. Hull.
\newblock A database for handwritten text recognition research.
\newblock {\em TPAMI}, 1994.

\bibitem{karras_progressive_2018}
Tero Karras, Timo Aila, Samuli Laine, and Jaakko Lehtinen.
\newblock Progressive growing of gans for improved quality, stability, and
  variation.
\newblock In {\em {ICLR}}, 2018.

\bibitem{cite:CVPR17DAMA}
Piotr Koniusz, Yusuf Tas, and Fatih Porikli.
\newblock Domain adaptation by mixture of alignments of second-or higher-order
  scatter tensors.
\newblock In {\em {CVPR}}, 2017.

\bibitem{koniusz2018museum}
Piotr Koniusz, Yusuf Tas, Hongguang Zhang, Mehrtash Harandi, Fatih Porikli, and
  Rui Zhang.
\newblock Museum exhibit identification challenge for the supervised domain
  adaptation and beyond.
\newblock In {\em ECCV}, pages 788--804, 2018.

\bibitem{kou_stochastic_2020}
Zhi Kou, Kaichao You, Mingsheng Long, and Jianmin Wang.
\newblock Stochastic {Normalization}.
\newblock In {\em {NeurIPS}}, 2020.

\bibitem{cite:OpenImage}
Alina Kuznetsova, Hassan Rom, Neil Alldrin, Jasper Uijlings, Ivan Krasin, Jordi
  Pont-Tuset, Shahab Kamali, Stefan Popov, Matteo Malloci, and Tom Duerig.
\newblock The open images dataset v4: {Unified} image classification, object
  detection, and visual relationship detection at scale.
\newblock {\em arXiv preprint arXiv:1811.00982}, 2018.

\bibitem{lecun_deep_2015}
Yann LeCun, Yoshua Bengio, and Geoffrey Hinton.
\newblock Deep learning.
\newblock {\em Nature}, 2015.

\bibitem{cite:IEEE98MNIST}
Yann LeCun, Léon Bottou, Yoshua Bengio, and Patrick Haffner.
\newblock Gradient-based learning applied to document recognition.
\newblock {\em Proc. IEEE}, 1998.

\bibitem{li2020deep}
Shuang Li, Chi~Harold Liu, Qiuxia Lin, Qi~Wen, Limin Su, Gao Huang, and
  Zhengming Ding.
\newblock Deep residual correction network for partial domain adaptation.
\newblock {\em TPAMI}, 2020.

\bibitem{liang2020baus}
Jian Liang, Yunbo Wang, Dapeng Hu, Ran He, and Jiashi Feng.
\newblock A balanced and uncertainty-aware approach for partial domain
  adaptation.
\newblock In {\em European Conference on Computer Vision (ECCV)}, August 2020.

\bibitem{liu_separate_2019}
Hong Liu, Zhangjie Cao, Mingsheng Long, Jianmin Wang, and Qiang Yang.
\newblock Separate to {Adapt}: {Open} {Set} {Domain} {Adaptation} via
  {Progressive} {Separation}.
\newblock In {\em {CVPR}}, 2019.

\bibitem{liu_detach_2018}
Yen-Cheng Liu, Yu-Ying Yeh, Tzu-Chien Fu, Sheng-De Wang, Wei-Chen Chiu, and
  Yu-Chiang Frank~Wang.
\newblock Detach and adapt: {Learning} cross-domain disentangled deep
  representation.
\newblock In {\em {CVPR}}, 2018.

\bibitem{long_transferable_2019}
Mingsheng Long, Yue Cao, Zhangjie Cao, Jianmin Wang, and Michael~I. Jordan.
\newblock Transferable {Representation} {Learning} with {Deep} {Adaptation}
  {Networks}.
\newblock {\em TPAMI}, 2019.

\bibitem{long_learning_2015}
Mingsheng Long, Yue Cao, Jianmin Wang, and Michael Jordan.
\newblock Learning {Transferable} {Features} with {Deep} {Adaptation}
  {Networks}.
\newblock In {\em {ICML}}, 2015.

\bibitem{long2015learning}
Mingsheng Long, Yue Cao, Jianmin Wang, and Michael Jordan.
\newblock Learning transferable features with deep adaptation networks.
\newblock In {\em ICML}, pages 97--105. PMLR, 2015.

\bibitem{NEURIPS2018_ab88b157}
Mingsheng Long, Zhangjie Cao, Jianmin Wang, and Michael~I Jordan.
\newblock Conditional adversarial domain adaptation.
\newblock In {\em NeurIPS}, volume~31, 2018.

\bibitem{cite:NIPS16RTN}
Mingsheng Long, Han Zhu, Jianmin Wang, and Michael~I. Jordan.
\newblock Unsupervised domain adaptation with residual transfer networks.
\newblock In {\em {NeurIPS}}, 2016.

\bibitem{cite:ICML17JAN}
Mingsheng Long, Han Zhu, Jianmin Wang, and Michael~I. Jordan.
\newblock Deep {Transfer} {Learning} with {Joint} {Adaptation} {Networks}.
\newblock In {\em {ICML}}, 2017.

\bibitem{cite:NIPS17LEL}
Zelun Luo, Yuliang Zou, Judy Hoffman, and Li~F Fei-Fei.
\newblock Label {Efficient} {Learning} of {Transferable} {Representations}
  acrosss {Domains} and {Tasks}.
\newblock In {\em {NeurIPS}}, 2017.

\bibitem{mnih_human-level_2015}
Volodymyr Mnih, Koray Kavukcuoglu, David Silver, Andrei~A. Rusu, Joel Veness,
  Marc~G. Bellemare, Alex Graves, Martin Riedmiller, Andreas~K. Fidjeland, and
  Georg Ostrovski.
\newblock Human-level control through deep reinforcement learning.
\newblock {\em Nature}, 2015.

\bibitem{cite:ICCV17SDA}
Saeid Motiian, Marco Piccirilli, Donald~A. Adjeroh, and Gianfranco Doretto.
\newblock Unified deep supervised domain adaptation and generalization.
\newblock In {\em {CVPR}}, 2017.

\bibitem{cite:CVPR18IIT}
Zak Murez, Soheil Kolouri, David Kriegman, Ravi Ramamoorthi, and Kyungnam Kim.
\newblock Image to image translation for domain adaptation.
\newblock In {\em {CVPR}}, 2018.

\bibitem{cite:NIPS11SVHN}
Yuval Netzer, Tao Wang, Adam Coates, Alessandro Bissacco, Bo~Wu, and Andrew~Y.
  Ng.
\newblock Reading digits in natural images with unsupervised feature learning.
\newblock In {\em {NeurIPS} workshop}, 2011.

\bibitem{odena_conditional_2017}
Augustus Odena, Christopher Olah, and Jonathon Shlens.
\newblock Conditional {Image} {Synthesis} with {Auxiliary} {Classifier} {GANs}.
\newblock In {\em {ICML}}, 2017.

\bibitem{cite:TNN11TCA}
Sinno~Jialin Pan, Ivor~W. Tsang, James~T. Kwok, and Qiang Yang.
\newblock Domain adaptation via transfer component analysis.
\newblock {\em TNNLS}, 2010.

\bibitem{pan2009survey}
Sinno~Jialin Pan and Qiang Yang.
\newblock A survey on transfer learning.
\newblock {\em TKDE}, 22(10):1345--1359, 2009.

\bibitem{panareda_busto_open_2017}
Pau Panareda~Busto and Juergen Gall.
\newblock Open set domain adaptation.
\newblock In {\em {ICCV}}, 2017.

\bibitem{peng_visda:_2018}
Xingchao Peng, Ben Usman, Neela Kaushik, Dequan Wang, Judy Hoffman, Kate
  Saenko, Xavier Roynard, Jean-Emmanuel Deschaud, Francois Goulette, and
  Tyler~L. Hayes.
\newblock {VisDA}: {A} {Synthetic}-to-{Real} {Benchmark} for {Visual} {Domain}
  {Adaptation}.
\newblock In {\em {CVPR}}, 2018.

\bibitem{quionero-candela_dataset_2009}
Joaquin Quionero-Candela, Masashi Sugiyama, Anton Schwaighofer, and Neil~D.
  Lawrence.
\newblock {\em Dataset shift in machine learning}.
\newblock 2009.

\bibitem{radford_unsupervised_2015}
Alec Radford, Luke Metz, and Soumith Chintala.
\newblock Unsupervised representation learning with deep convolutional
  generative adversarial networks.
\newblock In {\em {ICLR}}, 2015.

\bibitem{rodriguez_domain_2019}
Adrian~Lopez Rodriguez and Krystian Mikolajczyk.
\newblock Domain {Adaptation} for {Object} {Detection} via {Style}
  {Consistency}.
\newblock In {\em {BMVC}}, 2019.

\bibitem{cite:ILSVRC15}
Olga Russakovsky, Jia Deng, Hao Su, Jonathan Krause, Sanjeev Satheesh, Sean Ma,
  Zhiheng Huang, Andrej Karpathy, Aditya Khosla, and Michael Bernstein.
\newblock Imagenet large scale visual recognition challenge.
\newblock {\em IJCV}, 2015.

\bibitem{cite:CVPR18BDAGAN}
Paolo Russo, Fabio~M. Carlucci, Tatiana Tommasi, and Barbara Caputo.
\newblock From source to target and back: symmetric bi-directional adaptive
  gan.
\newblock In {\em {CVPR}}, 2018.

\bibitem{cite:ECCV10Office}
Kate Saenko, Brian Kulis, Mario Fritz, and Trevor Darrell.
\newblock Adapting {Visual} {Category} {Models} to {New} {Domains}.
\newblock In {\em {ECCV}}, 2010.

\bibitem{saito2017asymmetric}
Kuniaki Saito, Yoshitaka Ushiku, and Tatsuya Harada.
\newblock Asymmetric tri-training for unsupervised domain adaptation.
\newblock In {\em ICML}, pages 2988--2997. PMLR, 2017.

\bibitem{cite:CVPR18MCD}
Kuniaki Saito, Kohei Watanabe, Yoshitaka Ushiku, and Tatsuya Harada.
\newblock Maximum classifier discrepancy for unsupervised domain adaptation.
\newblock In {\em {CVPR}}, 2018.

\bibitem{saito_adversarial_2018}
Kuniaki Saito, Shohei Yamamoto, Yoshitaka Ushiku, and Tatsuya Harada.
\newblock Adversarial {Open} {Set} {Domain} {Adaptation}.
\newblock In {\em {ECCV}}, 2018.

\bibitem{cite:CVPR18GenerateAdapt}
Swami Sankaranarayanan, Yogesh Balaji, Carlos~D. Castillo, and Rama Chellappa.
\newblock Generate to adapt: {Aligning} domains using generative adversarial
  networks.
\newblock In {\em {CVPR}}, 2018.

\bibitem{scheirer_toward_2013}
Walter~J. Scheirer, Anderson de~Rezende Rocha, Archana Sapkota, and Terrance~E.
  Boult.
\newblock Toward {Open} {Set} {Recognition}.
\newblock {\em TPAMI}, 2013.

\bibitem{shiri_identity-preserving_2019}
Fatemeh Shiri, Xin Yu, Fatih Porikli, Richard Hartley, and Piotr Koniusz.
\newblock Identity-preserving face recovery from stylized portraits.
\newblock {\em IJCV}, 2019.

\bibitem{stallkamp_german_2011}
Johannes Stallkamp, Marc Schlipsing, Jan Salmen, and Christian Igel.
\newblock The {German} {Traffic} {Sign} {Recognition} {Benchmark}: {A}
  multi-class classification competition.
\newblock In {\em {IJCNN}}, 2011.

\bibitem{cite:AAAI16CORAL}
Baochen Sun, Jiashi Feng, and Kate Saenko.
\newblock Return of frustratingly easy domain adaptation.
\newblock In {\em {AAAI}}, 2016.

\bibitem{sutton_reinforcement_2018}
Richard~S. Sutton and Andrew~G. Barto.
\newblock {\em Reinforcement {Learning}, second edition: {An} {Introduction}}.
\newblock 2018.

\bibitem{tan2018survey}
Chuanqi Tan, Fuchun Sun, Tao Kong, Wenchang Zhang, Chao Yang, and Chunfang Liu.
\newblock A survey on deep transfer learning.
\newblock In {\em ICANN}, pages 270--279, 2018.

\bibitem{tas_cnn-based_2018}
Yusuf Tas and Piotr Koniusz.
\newblock {CNN}-based {Action} {Recognition} and {Supervised} {Domain}
  {Adaptation} on 3d {Body} {Skeletons} via {Kernel} {Feature} {Maps}.
\newblock In {\em {BMVC}}, 2018.

\bibitem{torralba_unbiased_2011}
Antonio Torralba and Alexei~A. Efros.
\newblock Unbiased look at dataset bias.
\newblock In {\em {CVPR}}, 2011.

\bibitem{cite:ICCV15SDT}
Eric Tzeng, Judy Hoffman, Trevor Darrell, and Kate Saenko.
\newblock Simultaneous deep transfer across domains and tasks.
\newblock In {\em {ICCV}}, 2015.

\bibitem{cite:CVPR17ADDA}
Eric Tzeng, Judy Hoffman, Kate Saenko, and Trevor Darrell.
\newblock Adversarial discriminative domain adaptation.
\newblock In {\em {CVPR}}, 2017.

\bibitem{vapnik_nature_2013}
Vladimir Vapnik.
\newblock {\em The nature of statistical learning theory}.
\newblock 2013.

\bibitem{cite:CVPR17OfficeHome}
Hemanth Venkateswara, Jose Eusebio, Shayok Chakraborty, and Sethuraman
  Panchanathan.
\newblock Deep hashing network for unsupervised domain adaptation.
\newblock In {\em {CVPR}}, 2017.

\bibitem{cite:CVPR18AFA}
Riccardo Volpi, Pietro Morerio, Silvio Savarese, and Vittorio Murino.
\newblock Adversarial feature augmentation for unsupervised domain adaptation.
\newblock In {\em {CVPR}}, 2018.

\bibitem{wang_self-tuning_2021}
Ximei Wang, Jinghan Gao, Mingsheng Long, and Jianmin Wang.
\newblock Self-{Tuning} for {Data}-{Efficient} {Deep} {Learning}.
\newblock In {\em {ICML}}, 2021.

\bibitem{cite:NIPS14FTL}
Xuezhi Wang and Jeff Schneider.
\newblock Flexible transfer learning under support and model shift.
\newblock In {\em {NeurIPS}}, 2014.

\bibitem{xie2020self}
Qizhe Xie, Minh-Thang Luong, Eduard Hovy, and Quoc~V Le.
\newblock Self-training with noisy student improves imagenet classification.
\newblock In {\em CVPR}, pages 10687--10698, 2020.

\bibitem{you_co-tuning_2020}
Kaichao You, Zhi Kou, Mingsheng Long, and Jianmin Wang.
\newblock Co-{Tuning} for {Transfer} {Learning}.
\newblock In {\em {NeurIPS}}, 2020.

\bibitem{you_universal_2019}
Kaichao You, Mingsheng Long, Zhangjie Cao, Jianmin Wang, and Michael~I. Jordan.
\newblock Universal {Domain} {Adaptation}.
\newblock In {\em {CVPR}}, 2019.

\bibitem{you_towards_2019}
Kaichao You, Ximei Wang, Mingsheng Long, and Michael Jordan.
\newblock Towards {Accurate} {Model} {Selection} in {Deep} {Unsupervised}
  {Domain} {Adaptation}.
\newblock In {\em {ICML}}, 2019.

\bibitem{cite:ICLR17CMD}
Werner Zellinger, Thomas Grubinger, Edwin Lughofer, Thomas Natschläger, and
  Susanne Saminger-Platz.
\newblock Central {Moment} {Discrepancy} ({CMD}) for {Domain}-{Invariant}
  {Representation} {Learning}.
\newblock In {\em {ICLR}}, 2017.

\bibitem{zhang_model_2018}
Hongguang Zhang and Piotr Koniusz.
\newblock Model {Selection} for {Generalized} {Zero}-shot {Learning}.
\newblock In {\em {ECCV} {Workshops}}, 2018.

\bibitem{cite:CVPR18IWAN}
Jing Zhang, Zewei Ding, Wanqing Li, and Philip Ogunbona.
\newblock Importance weighted adversarial nets for partial domain adaptation.
\newblock In {\em {CVPR}}, 2018.

\bibitem{cite:ICML13TCS}
Kun Zhang, Bernhard Schölkopf, Krikamol Muandet, and Zhikun Wang.
\newblock Domain adaptation under target and conditional shift.
\newblock In {\em {ICML}}, 2013.

\bibitem{cite:ICCV17CycleGAN}
Jun-Yan Zhu, Taesung Park, Phillip Isola, and Alexei~A. Efros.
\newblock Unpaired image-to-image translation using cycle-consistent
  adversarial networks.
\newblock In {\em {ICCV}}, 2017.

\bibitem{zohrizadeh_class_2019}
Fariba Zohrizadeh, Mohsen Kheirandishfard, and Farhad Kamangar.
\newblock Class {Subset} {Selection} for {Partial} {Domain} {Adaptation}.
\newblock In {\em {CVPR} {Workshops}}, 2019.

\end{thebibliography}
}

\vspace{-20pt}

\begin{IEEEbiography}[{\includegraphics[width=1in,height=1.25in,clip,keepaspectratio]{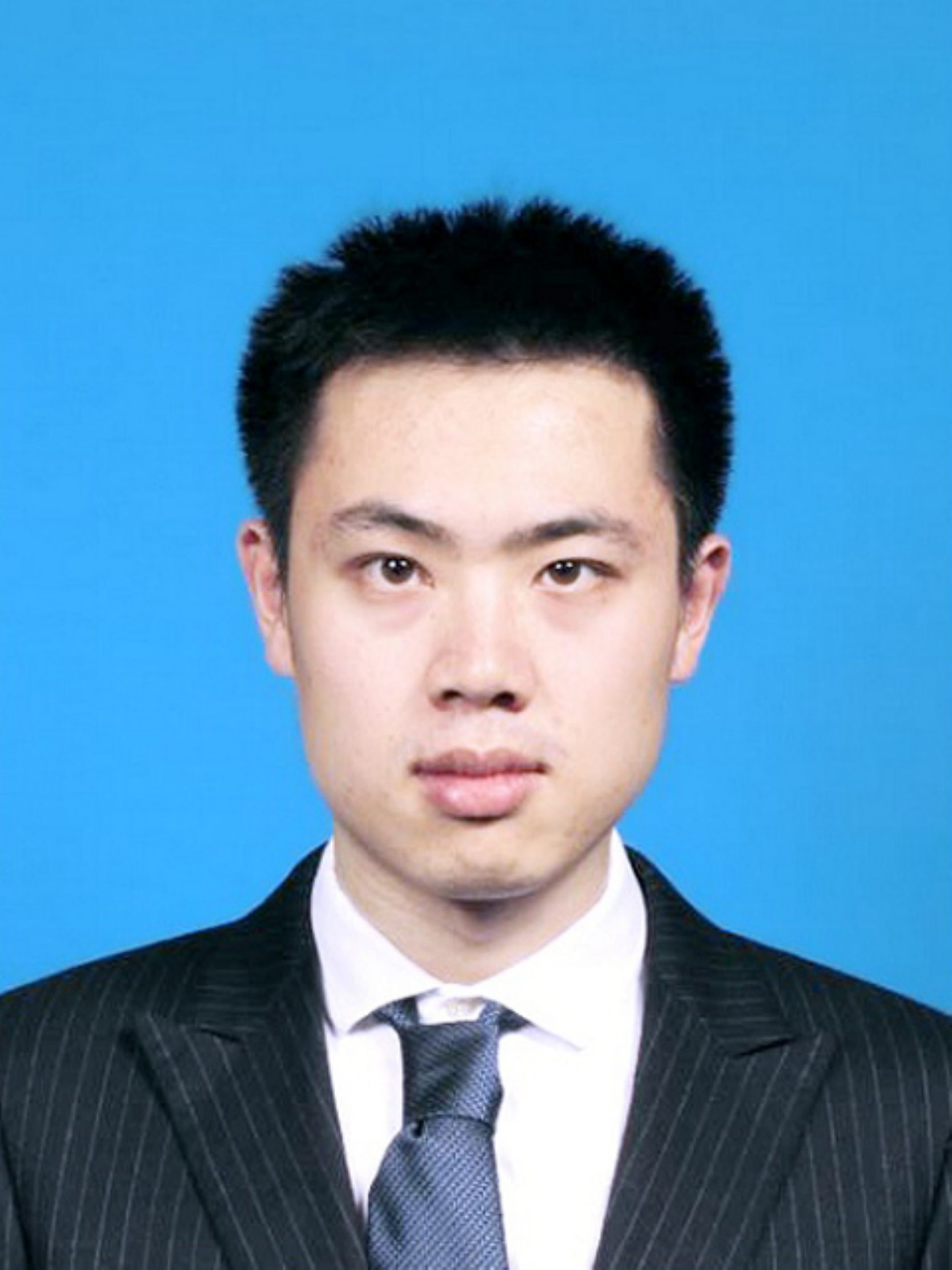}}]{Zhangjie Cao}
	received the B.E. degree in Computer Software from Tsinghua University, China in 2018. He is pursuing the Ph.D. degree in Department of Computer Science at Stanford University. His research interests include machine learning and computer vision.
\end{IEEEbiography}

\vspace{-20pt}

\begin{IEEEbiography}[{\includegraphics[width=1in,height=1.25in,clip,keepaspectratio]{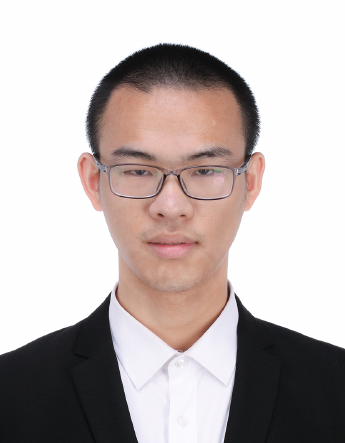}}]{Kaochao You}
	received the B.E. degree in Software Engineering from Tsinghua University, China in 2020. He is pursuing the Ph.D. degree in the School of Software, Tsinghua University. His research interests include machine learning and computer vision.
\end{IEEEbiography}

\vspace{-20pt}

\begin{IEEEbiography}[{\includegraphics[width=1in,height=1.25in,clip,keepaspectratio]{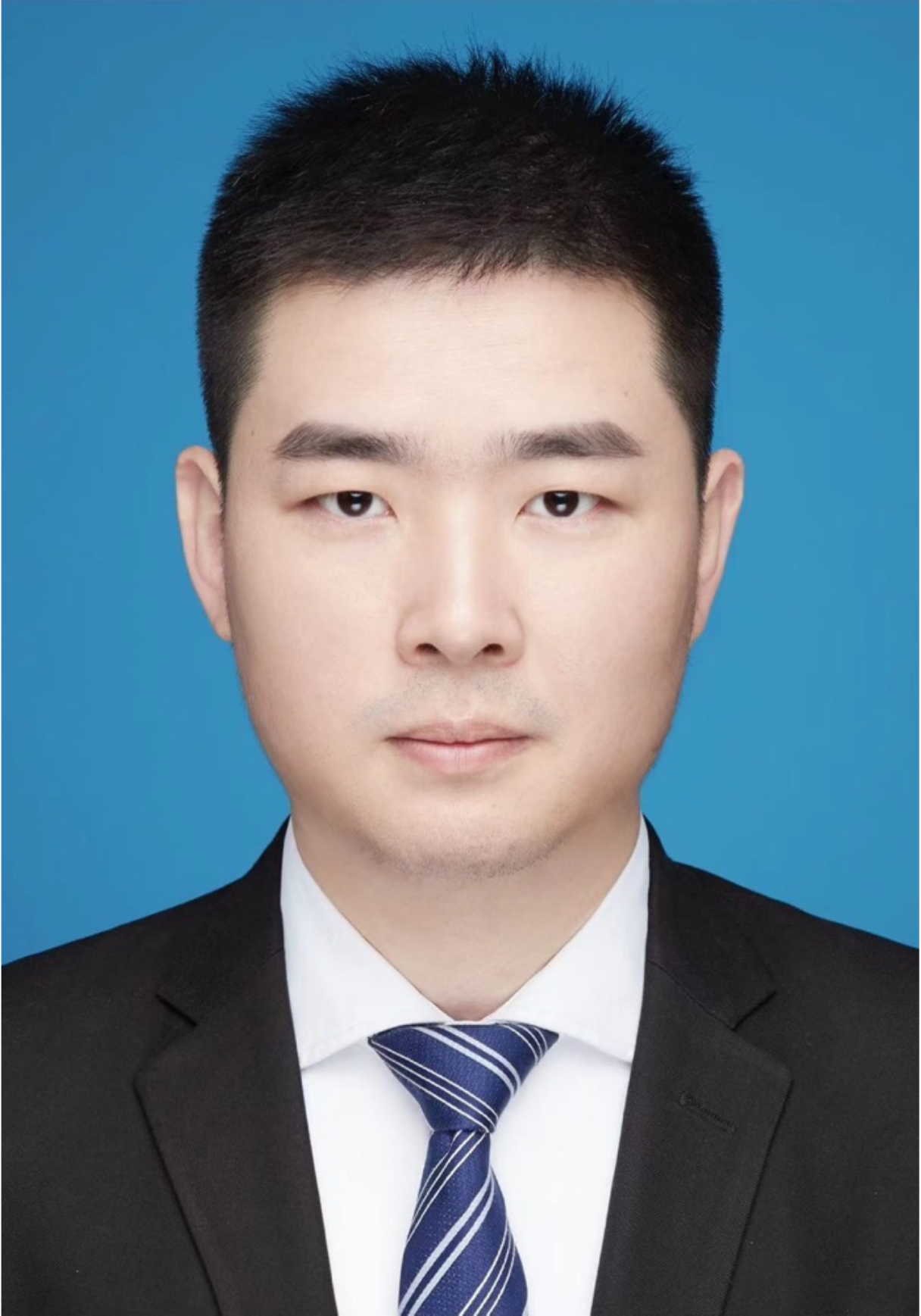}}]{Ziyang Zhang}
	received the Ph.D. degree in instrument science and technology from Tsinghua University in 2019. He is a senior engineer in 2012 lab of Huawei Technology Co., Ltd. His research interests include neuromorphic computing, event camera, digital human, automatic driving and brain computer interface.
\end{IEEEbiography}

\vspace{-20pt}

\begin{IEEEbiography}[{\includegraphics[width=1in,height=1.25in,clip,keepaspectratio]{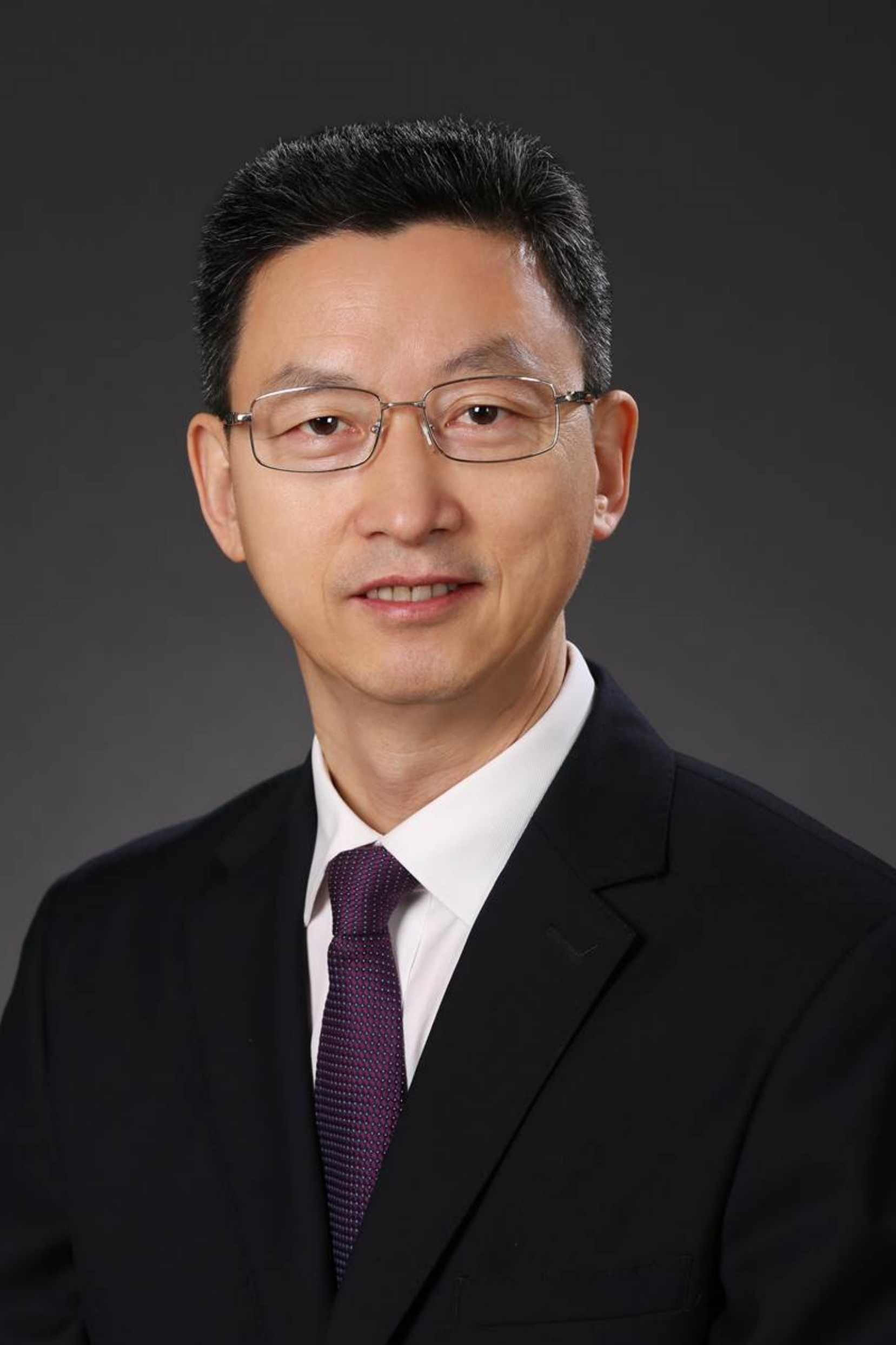}}]{Jianmin Wang}
	graduated from Peking University, China in 1990, and received the M.E. and Ph.D. degrees in Computer Software from Tsinghua University, China in 1992 and 1995, respectively. He is a full professor in the School of Software, Tsinghua University. His research interests include big data management systems, workflow and BPM technology, and large-scale data analytics. He led to develop a product data \& lifecycle management system, which has been deployed in hundreds of enterprises in China. He is leading to develop a big data system software stack named Tsinghua DataWay.
\end{IEEEbiography}

\vspace{-20pt}

\begin{IEEEbiography}[{\includegraphics[width=1in,height=1.25in,clip,keepaspectratio]{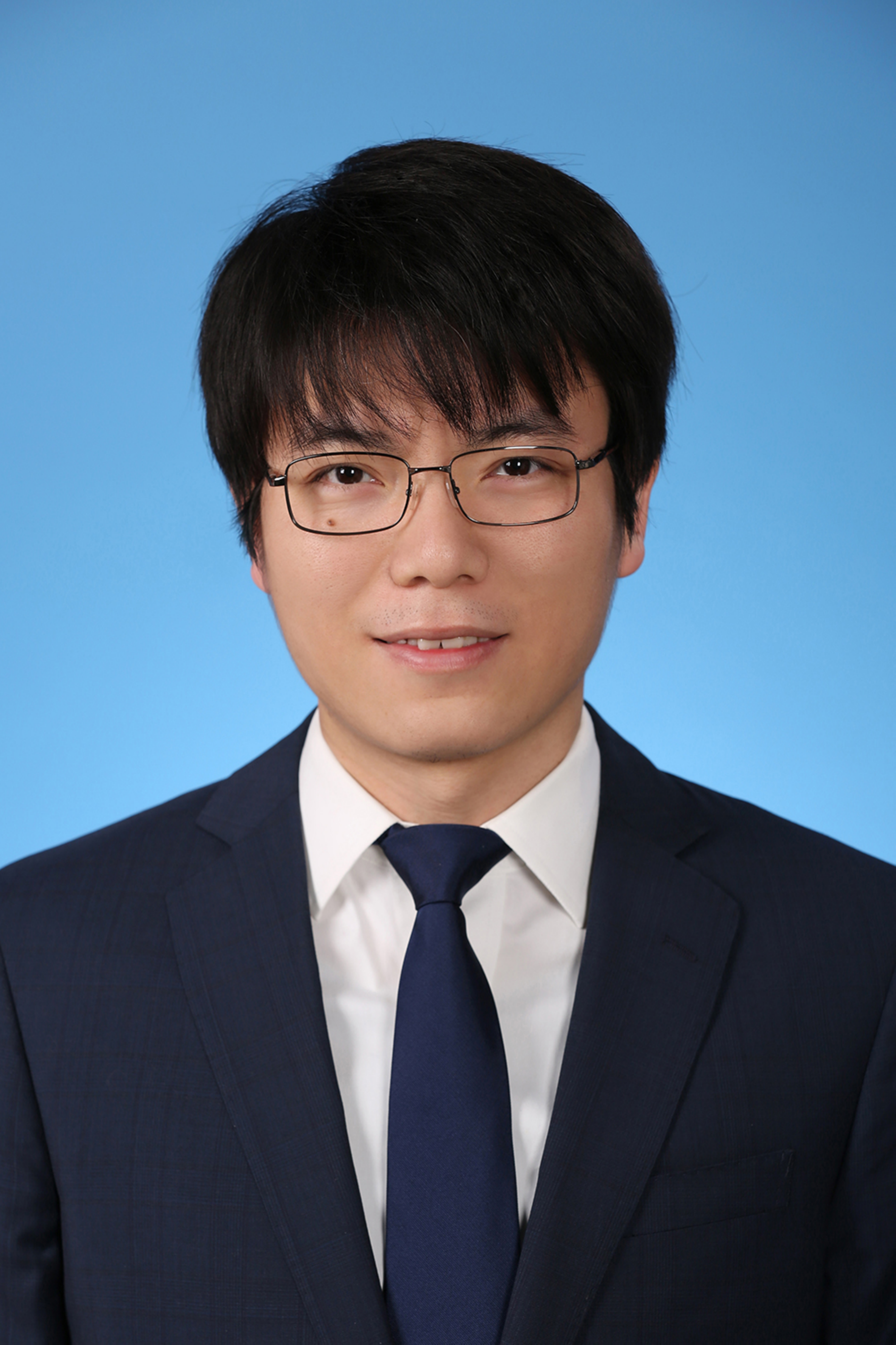}}]{Mingsheng Long}
	received the BE degree in electrical engineering and the PhD degree in computer science from Tsinghua University in 2008 and 2014 respectively. He was a visiting researcher in computer science, UC Berkeley from 2014 to 2015. He is an associate professor with the School of Software, Tsinghua University.	He is an associate editor of the \emph{IEEE Transactions on Pattern Analysis and Machine Intelligence}, and area chairs of major machine learning conferences, including ICML, NeurIPS, and ICLR. His research is dedicated to machine learning theories and algorithms, with special interests in transfer, adaptation, and data-efficient learning, foundation models, learning with spatiotemporal and scientific knowledge.
\end{IEEEbiography}

\end{document}